  \providecommand\BibTeX{{%
    \normalfont B\kern-0.5em{\scshape i\kern-0.25em b}\kern-0.8em\TeX}}
\newtheorem{definition}{Definition}[section]
\newtheorem{theorem}{Theorem}[section]
\newtheorem{lemma}[theorem]{Lemma}
\begin{document}

\newcommand{\bs}{\boldsymbol}

%%
%% The "title" command has an optional parameter,
%% allowing the author to define a "short title" to be used in page headers.
\title{AIGB: Generative Auto-bidding via Diffusion Modeling}

%%
%% The "author" command and its associated commands are used to define
%% the authors and their affiliations.
%% Of note is the shared affiliation of the first two authors, and the
%% "authornote" and "authornotemark" commands
%% used to denote shared contribution to the research.
\author{Jiayan Guo}
\authornote{Work is done during the internship at Alibaba Group.}
\email{guojiayan@pku.edu.cn}
% \orcid{1234-5678-9012}
% \author{G.K.M. Tobin}
% \authornotemark[1]
% \email{webmaster@marysville-ohio.com}
% \affiliation{%
%   \institution{}
%   % \streetaddress{P.O. Box 1212}
%   \city{Beijing}
%   \country{China}
%   % \postcode{43017-6221}
% }

\affiliation{%
  \institution{Peking University \\ Alibaba Group}
  % \streetaddress{P.O. Box 1212}
  \city{Beijing}
  \country{China}
  % \postcode{43017-6221}
}

\author{Yusen Huo}
\email{huoyusen.huoyusen@alibaba-inc.com}
\affiliation{%
  \institution{Alibaba Group}
  % \streetaddress{1 Th{\o}rv{\"a}ld Circle}
  \city{Beijing}
  \country{China}}

\author{Zhilin Zhang}
\email{zhangzhilin.pt@alibaba-inc.com}
\affiliation{%
  \institution{Alibaba Group}
  \city{Beijing}
  \country{China}
}

\author{Tianyu Wang}
\email{yves.wty@@alibaba-inc.com}
\affiliation{%
 \institution{Alibaba Group}
 % \streetaddress{Rono-Hills}
 \city{Beijing}
 % \state{Arunachal Pradesh}
 \country{China}}

\author{Chuan Yu}
\email{yuchuan.yc@alibaba-inc.com}
\affiliation{%
 \institution{Alibaba Group}
 % \streetaddress{Rono-Hills}
 \city{Beijing}
 % \state{Arunachal Pradesh}
 \country{China}}

\author{Jian Xu}
\email{xiyu.xj@alibaba-inc.com}
\affiliation{%
 \institution{Alibaba Group}
 % \streetaddress{Rono-Hills}
 \city{Beijing}
 % \state{Arunachal Pradesh}
 \country{China}}

  \author{Yan Zhang}
\email{zhyzhy001@pku.edu.cn}
\affiliation{%
  \institution{Peking University}
  % \streetaddress{P.O. Box 1212}
  \city{Beijing}
  % \state{Ohio}
  \country{China}
  % \postcode{43017-6221}
} 

 \author{Bo Zheng}
\email{bozheng@alibaba-inc.com}
\authornote{Bo Zheng is the corresponding author.}
\affiliation{%
 \institution{Alibaba Group}
 % \streetaddress{Rono-Hills}
 \city{Beijing}
 % \state{Arunachal Pradesh}
 \country{China}}

%%
%% By default, the full list of authors will be used in the page
%% headers. Often, this list is too long, and will overlap
%% other information printed in the page headers. This command allows
%% the author to define a more concise list
%% of authors' names for this purpose.
\renewcommand{\shortauthors}{Jiayan and Yusen, et al.}

%%
%% The abstract is a short summary of the work to be presented in the
%% article.
\begin{abstract}
Auto-bidding plays a crucial role in facilitating online advertising by automatically providing bids for advertisers. Reinforcement learning~(RL) has gained popularity for auto-bidding. However, most current RL auto-bidding methods are modeled through the Markovian Decision Process~(MDP), which assumes the Markovian state transition. This assumption restricts the ability to perform in long horizon scenarios and makes the model unstable when dealing with highly random online advertising environments. To tackle this issue, this paper introduces AI-Generated Bidding (AIGB), a novel paradigm for auto-bidding through generative modeling. In this paradigm, we propose DiffBid, a conditional diffusion modeling approach for bid generation. DiffBid directly models the correlation between the return and the entire trajectory, effectively avoiding error propagation across time steps in long horizons. Additionally, DiffBid offers a versatile approach for generating trajectories that maximize given targets while adhering to specific constraints. Extensive experiments conducted on the real-world dataset and online A/B test on Alibaba advertising platform demonstrate the effectiveness of DiffBid, achieving 2.81\% increase in GMV and 3.36\% increase in ROI.
\end{abstract}

%%
%% The code below is generated by the tool at http://dl.acm.org/ccs.cfm.
%% Please copy and paste the code instead of the example below.
%%
\begin{CCSXML}
<ccs2012>
<concept>
<concept_id>10002951.10003227.10003447</concept_id>
<concept_desc>Information systems~Computational advertising</concept_desc>
<concept_significance>500</concept_significance>
</concept>
</ccs2012>
\end{CCSXML}

\ccsdesc[500]{Information systems~Computational advertising}
% \ccsdesc[300]{Computer systems organization~Redundancy}
% \ccsdesc{Computer systems organization~Robotics}
% \ccsdesc[100]{Networks~Network reliability}

%%
%% Keywords. The author(s) should pick words that accurately describe
%% the work being presented. Separate the keywords with commas.
\keywords{Online Advertising, Auto-bidding, Generative Learning, Diffusion Modeling}

%% A "teaser" image appears between the author and affiliation
%% information and the body of the document, and typically spans the
%% page.
% \begin{teaserfigure}
%   \includegraphics[width=\textwidth]{}
%   \caption{Seattle Mariners at Spring Training, 2010.}
%   \Description{Enjoying the baseball game from the third-base
%   seats. Ichiro Suzuki preparing to bat.}
%   \label{fig:teaser}
% \end{teaserfigure}

% \received{20 February 2007}
% \received[revised]{12 March 2009}
% \received[accepted]{5 June 2009}

%%
%% This command processes the author and affiliation and title
%% information and builds the first part of the formatted document.
\maketitle

\section{Introduction}

The ever-increasing digitalization of commerce has exponentially expanded the scope and importance of online advertising platforms~\cite{ha2008online,evans2009online}. These ad platforms have become indispensable for businesses to effectively target their audience and drive sales. Traditionally, advertisers need to manually adjust bid prices to optimize overall ad performance. However, this coarse bidding process becomes impractical when dealing with trillions of impression opportunities, requiring extensive domain knowledge~\cite{chiesi1979acquisition} and comprehensive information about the advertising environments.

To alleviate the burden of bid optimization for advertisers,  these ad platforms provide auto-bidding services~\cite{deng2021towards,balseiro2021landscape,balseiro2021robust,ou2023deep}. These services automate the determination of bids for each impression opportunity by employing well-designed bidding strategies. Such strategies consider a variety of factors about advertising environments and advertisers, such as the distribution of impression opportunities, budgets, and average cost constraints~\cite{li2022auto}. Considering the dynamic nature of advertising environments, it is essential to regularly optimize the bidding strategy, typically at intervals of a few minutes, in response to changing conditions. 
%Thus,
With advertising episodes typically extending beyond 24 hours, auto-bidding can be seen as a sequential decision-making process with a long planning horizon where the bidding strategy seeks to optimize performance throughout the entire episode.

Recently, reinforcement learning~(RL) techniques have been employed to optimize auto-bidding strategies
%More recently, offline RL approaches~\cite{} have emerged, 
through the training of agents with bidding logs collected from online advertising environments~\cite{jin2018real,cai2017real,wang2017display,he2021unified,zhang2023personalized,mou2022sustainable}. By leveraging historical realistic bidding information, these agents can learn patterns and trends to make informed bidding decisions. However, most existing RL auto-bidding methods are based on the Markovian decision process~(MDP), where the next state only depends on the current state and action. In the online auto-bidding environment, this assumption may been challenged by our statistical analysis presented in Figure~\ref{fig:corr}, which shows a significant increase in the correlation between the sequence lengths of history states and the next bidding state. This result indicates that solving auto-bidding considering only the last state will encounter several problems, including instability in the highly random online advertising environment. Additionally, the RL methods that rely on the Bellman equation often result in compound errors ~\cite{pmlr-v162-fujimoto22a}. This issue is especially pronounced in the auto-bidding problem characterized by sparse return and limited data coverage. A detailed statistical analysis is provided in~\ref{sec:action_control}.
\begin{figure}[t]
    \centering
    \includegraphics[width=.6\linewidth]{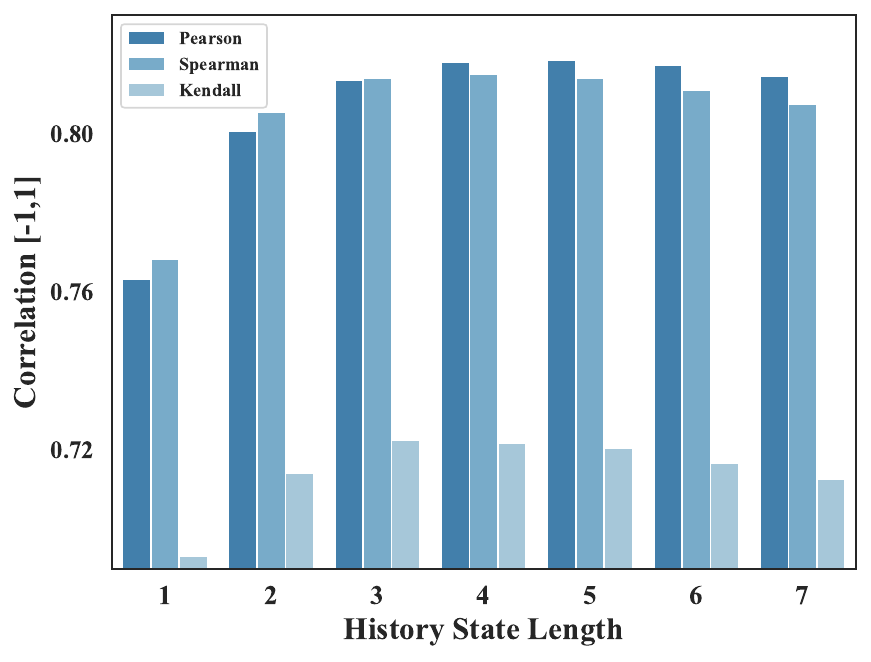}
    \caption{Correlation Coefficients between History and the Next State.}
    \label{fig:corr}
    \vspace{-0.2cm}
\end{figure}
In this paper, instead of employing RL-based methods, we present a novel paradigm, AI Generated Bidding (AIGB), that regards auto-bidding as a generative sequential decision-making problem. AIGB directly capture the correlation between the return and the entire bidding trajectory that consists of a sequence of states or actions, thereby transforming the problem into learning to generate an optimal bidding trajectory. This approach enables us to overcome the limitations of RL when dealing with the highly random online advertising environment, sparse returns, and limited data coverage.

In the new paradigm, we propose \textbf{\textit{Diff}}usion auto-\textbf{\textit{bid}}ding model DiffBid. It gradually corrupts the bidding trajectory by injecting scheduled Gaussian noises into the forward process. Then, it reconstructs trajectory from corrupted ones given returns and temporal conditions via a parameterized neural network. 
%As the action range for bidding is highly different from the state, we only model the generation of states. By disentangling the modeling of states and actions, the model can be more easily trained and more space-efficient. 
%Considering the discrepancy between the action range and the state range entails a heightened level of complexity whenf incorporating both elements into the model training, our approach lies solely in modeling the generation of states. 
We further propose a non-Markovian inverse dynamics~\cite{nguyen2008learning,10.1145/3488560.3498524,10.1145/3539597.3570445} to more accurately generate optimal bidding parameters. Taking one step further, DiffBid provides flexibility to closely align with the specific needs of advertisers by accommodating diverse constraints like cost-per-click (CPC) and incorporating human feedback. Notably, DiffBid serves as a unified model capable of mastering multiple tasks simultaneously, dynamically composing various bidding trajectory components to generate sequences that efficiently maximize diverse targets while adhering to a range of predefined constraints. To assess the effectiveness of DiffBid, we conducted extensive evaluations offline and online against baselines. Our results indicate that DiffBid surpasses RL methods for auto-bidding. 
In summary:
\begin{itemize}
\item 
%We present AIGB paradigm, under which falls DiffBid, a conditional generative learning approach utilizing denoising diffusion probabilistic models for bid generation. It treats auto-bidding as a sequential decision-making problem and addresses the challenges posed by randomness of the environment with sparse returns.
We uncover that the Markov assumptions upon which common decision-making methods rely are not applicable to the auto-bidding problem. Therefore, we propose a novel bidding paradigm with non-Markovian properties based on generative learning. This paradigm represents a significant innovation in modeling methodology compared with existing RL methods commonly used in auto-bidding.

% DiffBid exhibits strong generalization capabilities, enabling the incorporation of complex constraints.
%it is flexible in enabling more complex constraints.

\item 
% To further enhance the  to the online ad environment's randomness, we propose a non-Markov approach to generate bids given historical states and the predicted future state. incorporate a temporal conditioning strategy to make DiffBid training and inference more consistent.
%To further enhance the state representation, we propose a non-Markovian method to generate bidding parameters. This approach improves the consistency of bid generation during both training and inference processes. We also theoretically prove that DiffBid belongs to a solution to decision-making problems without Markovian assumptions, which surpasses traditional RL methods.
Unlike common bidding methods, our approach captures the correlation between the return and the entire bidding trajectory. This design enables the method to address important challenges, such as sparse returns, and ensures stability in the highly random advertising environment. Finally, we prove that the proposed diffusion modeling is equivalent in terms of optimality to solving a non-Markovian decision problem.
\item We demonstrate that the method can integrate capabilities to handle a variety of tasks within a unified solution, transcending the limitations of traditional task-specific methods. It shows that DiffBid outperforms conventional RL methods in auto-bidding, and achieves significant performance gain on a leading E-commerce ad platform through both offline and online evaluation. In specific, it achieves 2.81\% increase in GMV and 3.36\% in ROI.
\end{itemize}

% However, RL relies on temporal difference learning (TDL), which involves propagating errors over multiple time steps during training~\cite{levine2020offline,pmlr-v75-jiang18a,weiboosting}.
% In the context of long-horizon decision trajectories in auto-bidding, RL faces challenges due to the reduced correlation between early bid price actions and the final received rewards. This diminished correlation can undermine the effectiveness of bidding strategies. Moreover, the fact that performance achievements are only available at the end of the bidding process adds further complexity to training an effective strategy using RL. The lack of immediate feedback at each time step makes it challenging for the agent to learn and adapt its bidding strategy effectively.

\section{Preliminary}

\subsection{Problem Formulation}
\begin{figure*}[t]
    \centering
    \includegraphics[width=.75\linewidth]{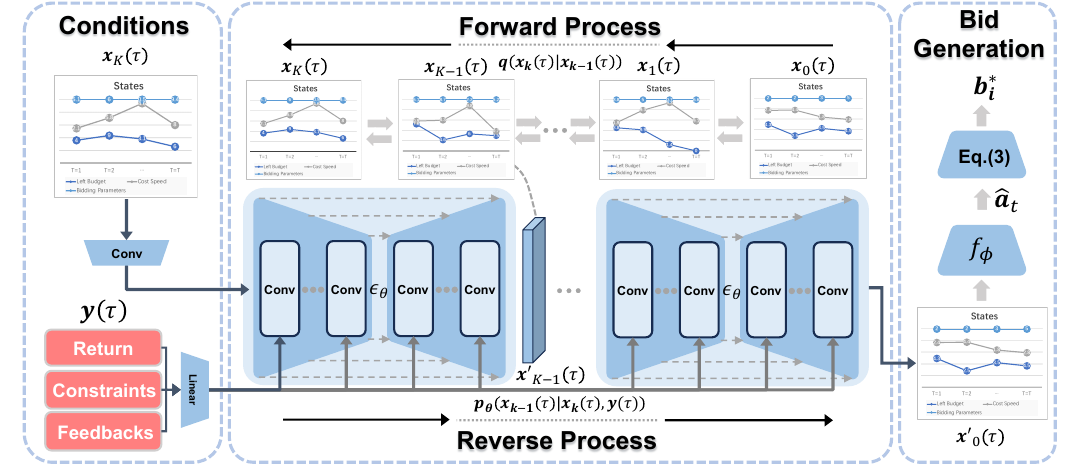}
    \caption{Overall Framework for Generative Auto-bidding.} 
    
    % We construct the relationship between condition and trajectory through the diffusion process. When generating the bidding parameters, we simply use $p_\theta$ with expected conditions and history states to predict future states and generate the bidding parameters through inverse dynamics $f_\phi$.}
    \label{fig:model}
\end{figure*}
For simplicity, we consider auto-bidding with cost-related constraints. During a time period, suppose there are $N$ impression opportunities arriving sequentially and indexed by $i$. In this setting, advertisers submit bids to compete for each impression opportunity. An advertiser will win the impression if its bid $b_i$ is greater than others. Then it will incur a cost $c_i$ for winning and getting the value. 

During the period, the mission of an advertiser is to maximize the total received value $\sum_io_iv_i$, where $v_i$ is the value of impression $i$ and $o_i$ is whether the advertiser wins impression $i$. Besides, we have the budget and several constraints to control the performance of ad deliveries. Budget constraints are simply $\sum_{i}o_ic_i\le B$, where $c_i$ is the cost of impression $i$ and $B$ is the budget. The other constraints are complex and according to~\cite{he2021unified} we have the unified formulation:
\begin{equation}
    \frac{\sum_{i}c_{ij}o_i}{\sum_i p_{ij}o_i}\le C_j,
\end{equation}
\noindent where $C_j$ is the upper bound of $j$'th constraint. $p_{ij}$ can be any performance indicator, e.g. return, or constant. $c_{ij}$ is the cost of constraint $j$. Given $J$ constraints, we have the Multi-constrained Bidding~(MCB) as:
\begin{equation}
\begin{split}
    \mathop{\text{maximize}}_{o_i} & \sum_{i}o_iv_i \\
    \text{s.t.} & \sum_{i}o_ic_i\le B \\
    & \frac{\sum_{i}c_{ij}o_i}{\sum_i p_{ij}o_i}\le C_j, \ \ \ \forall j \\
    & \ \ \ \ \  \ \ \ \ \ \ o_i\in\{0,1\}, \ \forall i \\
\end{split}
\label{eq:problem}
\end{equation}
A previous study~\cite{he2021unified} has already shown the optimal solution:
\begin{equation}
    b_i^*=\lambda_0 v_i+C_i\sum_{j=1}^J\lambda_j p_{ij},
    \label{eq:bid_formula}
\end{equation}
\noindent where $b_i^*$ is the predicted optimal bid for the impression $i$. $\lambda_j, \ j \in \{0,...,J\}$ are the optimal bidding parameters. Specifically, when considering only the budget constraint, it is the Max Return bidding. However, when considering both the budget constraint and the CPC constraint, it is called Target-CPC bidding.
From an alternative perspective, the optimal strategy involves arranging all impressions in order of their cost-effectiveness (CE) and then selecting every impression opportunity that surpasses the optimal CE ratio $ce^*$. This threshold enforces the constraint, and the optimal bidding parameters $\lambda_0  = 1 / ce^*$. 
%cite{DBLP:conf/icml/HaoPMWJHCBXXZYL20}   

\subsection{Auto-Bidding as Decision-Making}
Eq.(\ref{eq:bid_formula}) gives the formation of the optimal bid $b_i^*$ with bidding parameters $\lambda_j, \ j \in \{0,...,J\} $. However, in practice, the highly random and complex nature of the advertising environment prevents direct calculation of the bidding parameters. They must be carefully calibrated to adapt to the environment and dynamically adjusted as it evolves, This subsequently makes auto-bidding a sequential decision-making problem. To model it with decision-making, we introduce states $\bs{s}_t\in\mathcal{S}$ to describe the real-time advertising status and actions $\bs{a}_t\in\mathcal{A}$ to adjust the corresponding bidding parameters. The auto-bidding agent will take action $\bs{a}_t$ at the state $\bs{s}_t$ based on its policy $\pi$, and then the state will transit to the next state $\bs{s}_{t+1}$ and gain reward $\bs{r}_t\in\mathcal{R}$ according to the advertising environment dynamics $\mathcal{T}$. When $\mathcal{T}:\bs{s}_t\times \bs{a}_t\rightarrow\bs{s}_{t+1}\times\bs{r}_t$ satisfies, it is called the Markovian decision process~(MDP). Otherwise, it is a non-Markovian decision process. We next describe the key items of the automated 
bidding agent in the industrial online advertising system:
\begin{itemize}
    \item State $\bs{s}_t$ describes the real-time advertising status at time period $t$, which includes 1) remaining time of the advertiser; 2) remaining budget; 3) budget spend speed; 4) real-time cost-efficiency~(CPC), 5) and average cost-efficiency~(CPC). 
    % The reason to choose these cost-specific metrics as states is that in online advertising system, the cost effective $ce$ of winning impression opportunity diminishes as the scale increases, exhibiting a power law distribution. Besides,  The decline rates vary across time periods due to fluctuating scales of impressions. Therefore, the optimal value $ce^*$ correlate with a specific state trajectory with costs.
    \item Action $\bs{a}_t$ indicates the adjustment to the bidding parameters at the time period $t$, which has the dimension of the number of bidding parameters $\lambda_j, \ j=1,..,J$ and modeled as $(a_t^{\lambda_0},...,a_t^{\lambda_J})$. 
    %After receiving $\bs{s}_t$, the agent uses its policy $\pi$, implemented with a neural network, to generate $\bs{a}_t=\pi(\bs{s}_t)$
    \item The reward $\bs{r}_t$ is the value contributed to the objective obtained within the time period $t$.
    \item A trajectory $\tau$ is the index of a sequence of states, actions, and rewards within an episode.
\end{itemize}
% The reason to choose cost-specific metrics as states is that in the online advertising system, the cost-effectiveness~(CE) of winning impressions declines with increasing scale, exhibiting a power law distribution. Furthermore, the decline rate varies across time periods due to fluctuating scales of impressions. Consequently, the optimal CE value, denoted by CE$^*$, correlates with a specific trajectory characterized by costs $c_0*, c_1*, ..., c_T*$, where $B = \sum_{i=0}^T c_{i}^*$. By approaching this target trajectory, we can optimize the total return.
In the online advertising system, learning policy through direct interaction with the online environment is unfeasible due to safety concerns. Nonetheless, access to historical bidding logs, incorporating trajectories from a variety of bidding strategies, is attainable and provides a viable alternative. Prevalent auto-bidding methods predominantly leverage this offline data to craft effective policies. Our approach is aligned with this practice and will be elaborated in detail in the subsequent chapters.

\section{AIGB PARADIGM for AUTO-BIDDING }

%\subsection{State Trajectory for Auto-Bidding}
%\label{sec:Analytical_results}

To thoroughly investigate the auto-bidding problem, we conducted a series of statistical analyses of bidding trajectories, with detailed information available in appendix~\ref{statistical_analysis}. These analyses provide us with the insight that devising an effective bidding strategy is essentially equivalent to optimizing a state trajectory. Armed with this insight, we propose a hierarchical paradigm for auto-bidding that prioritizes the state trajectory optimization and subsequently generates actions aligned with the optimized trajectory.

For state trajectory optimization, we can employ a generative model to capture the joint distribution of the entire bidding trajectory and its associated returns, subsequently generating the trajectory distribution conditioned on the desired return. This approach enables us to address key auto-bidding challenges by employing SOTA generative algorithms. This paper presents an implementation that utilizes Denoising Diffusion Probabilistic Models (DDPM). For action generation, several off-the-shelf methods can be utilized to predict the proper action given the target state trajectory. In this paper, we apply a widely used inverse dynamics model.
The hierarchical paradigm divides auto-bidding into two supervised learning problems, offering several advantages that include enhanced interpretability and increased stability during the training process. 
%In auto-bidding, suffer from high randomness.
%For the state trajectory optimization, one way is to use conditional distribution.
%the cost at time step $t$ can be calculated as $c_t = \sum_{i=0}^{n_{t - 1}} c_{it}$, meaning

\section{Diffusion Auto-bidding Model}

In this section, we give a detailed introduction of the proposed diffusion Auto-bidding Model~(DiffBid). We will first give the modeling of Auto-bidding through diffusion models in Section~\ref{sec:overview}. Then we give a detailed description of the forward process in Section~\ref{sec:forward}, the reverse process in Section~\ref{sec:generation}, and the training process in Section~\ref{sec:training}. Finally, we will give the complexity analysis in Section~\ref{sec:complexity}.

% \subsection{Problem Definition}
% \label{sec:offline_rl}

% Offline RL (also known as batch RL)~\cite{fujimoto2019off,kumar2020conservative,qin2022neorl,levine2020offline,kumar2019stabilizing,siegel2019keep,wang2020critic,chen2020bail,wu2019behavior} aims to learn better policies based on a fixed offline dataset collected by some behavior policies. The main challenge
% offline RL addressed is the extrapolation error caused by missing data~\cite{levine2020offline}. Specifically, offline RL algorithms usually address this challenge in three ways, including policy constraint methods such as BCQ~\cite{fujimoto2019off}, BEAR~\cite{kumar2019stabilizing}, and conservative regularization methods such as CQL~\cite{kumar2020conservative}, BRAC~\cite{wu2019behavior}, as well as modifications of imitation learning~\cite{brandfonbrener2021offline} such as ABM~\cite{siegel2019keep}, CRR~\cite{wang2020critic}, BAIL~\cite{chen2020bail}. However, there is another important challenge that offline RL cannot solve: the fixed offline dataset cannot be guaranteed to contain sufficient state-action pairs from high-reward regions~\cite{levine2020offline}. This challenge exists in many practical applications, including the auto-bidding problem, and can inherently prevent offline RL algorithms from learning excellent policies.

\subsection{Diffusion Modeling of Auto-bidding}
\label{sec:sequential_modeling}

\subsubsection{\textbf{Overview}}
\label{sec:overview}
%Auto-bidding can be seen as the problem of making a sequence of decisions during a period under certain constraints, 
%This problem can be tackled using RL techniques, 
%where an agent is asked to decide the optimal bidding parameters given the information of current states and constraints. 
%To address the auto-bidding problem from offline data without relying on TDL~\cite{tesauro1995temporal} and without risking distribution shift, 
We model such sequential decision-making problem through conditional generative modeling~\cite{chen2021decision,ajay2022conditional} by maximum likelihood estimation~(MLE):
\begin{equation}
    \mathop{\text{max}}_\theta\mathbb{E}_{\tau\sim D}\left [ \text{log}p_\theta (\boldsymbol{x}_0(\tau)|\bs{y}(\tau)) \right ]
\label{eq:objective}
\end{equation}
\noindent where $\tau$ is the trajectory index, $\bs{x}_0(\tau)$ is the original trajectory of states and $\bs{y}(\tau)$ is the corresponding property. The goal is to estimate the conditional data distribution with $p_\theta$ so that the future states of a trajectory $\bs{x}_0(\tau)$ from information $\bs{y}(\tau)$ can be generated. For example, in the context of online advertising, $\bs{y}(\tau)$ can be the constraints or the total value of the entire trajectory. Under such a setting, we can formalize the conditional diffusion modeling for auto-bidding:
\begin{equation}
    q(\bs{x}_{k+1}(\tau)|\bs{x}_k(\tau)), \ \ p_\theta(\bs{x}_{k-1}(\tau)|\bs{x}_k(\tau),\bs{y}(\tau)),
\end{equation}
\noindent where $q$ represents the forward process in which noises are gradually added to the trajectory while $p_\theta$ is the reverse process where a model is used for denoising. The detailed introduction of diffusion modeling can be found in Appendix~\ref{sec:diffusion_modeling}. The overall framework is presented in Figure~\ref{fig:model}. We will make a detailed discussion about the two modeling processes in the following sections.

\subsubsection{\textbf{Forward Process via Diffusion over States}}
\label{sec:forward}

%To model $q(\bs{x}_k(\tau)|\bs{x}_{k-1}(\tau))$, one naive approach is to jointly model the diffusion process over state $\bs{s}$ and action $\bs{a}$. However, in the context of auto-bidding, the overall environment is highly random which leads to a high variance of correlation between the $\bs{s}$~(e.g., current budget, time, cost, etc) and $\bs{a}$. This brings an extra challenge to decide the bidding parameter in an end-to-end fashion. Thus we apply a divide-and-conquer approach, where we model the state and the bidding parameter separately to relieve the burden of complex modeling. 

%Detailed elaboration on the modeling methodology will be furnished in the subsequent chapter. 
We model the forward process $q(\bs{x}_{k+1}(\tau)|\bs{x}_{k}(\tau))$ via diffusion over states, where:
\begin{equation}
    \bs{x}_{k}(\tau):=\left (\bs{s}_1,...,\bs{s}_t,...,\bs{s}_{T} \right )_k,
\end{equation}
where $\bs{s}_t$ is modeled as a one-dimensional vector. $\bs{x}_{k}(\bs{\tau})$ is a noise sequence of states and can be represented by a two-dimensional array where the first dimension is the time periods and the second dimension is the state values. Merely sampling states is not enough for an agent. Given $\bs{x}_k(\tau)$, we model the diffusion process as a Markov chain, where $\bs{x}_k(\tau)$ is only dependent on $\bs{x}_{k-1}(\tau)$:
\begin{equation}
    q(\bs{x}_k(\tau)|\bs{x}_{k-1}(\tau))=\mathcal{N}\left ( \bs{x}_{k}(\tau);\sqrt{1-\beta_k}\bs{x}_{k-1}(\tau),\beta_k I \right ),
\label{eq:add_noise}
\end{equation}  
\noindent when $k\rightarrow\infty$, $x_k(\tau)$ approaches a sequence of standard Gaussian distribution where we can make sampling through the re-parameterization trick and then gradually denoise the trajectory to produce the final state sequence. For the design of $\beta_k, \ \ k=1,...,K$, we apply cosine schedule~\cite{nichol2021improved} to assign the corresponding values which smoothly increases diffusion noises using a cosine function to prevent sudden changes in the noise level. The details for noise schedule can be found in the appendix.

% defined as:
% \begin{equation}
%     \overline{\alpha}_k=\frac{g(t)}{g(0)}=\frac{\text{cos}\left ( \frac{k/K+\gamma}{1+\gamma}\cdot\frac{\pi}{2} \right )}{\text{cos}\left ( \frac{\gamma}{1+\gamma}\cdot\frac{\pi}{2} \right )},
% \end{equation}
% where $\overline{\alpha}_k=\prod_{i=1}^k\alpha_k=\prod_{i=1}^k(1-\beta_k)$ and $\gamma$ is a constant.

\subsubsection{\textbf{Reverse Process for Bid Generation}}
\label{sec:generation}

% Given the conditional distribution $p_\theta(\bs{x}_{k-1}(\tau)|\bs{x}_k(\tau),\bs{y}(\tau))$, we aim to generate the next state and the bidding parameters the agent should take. To use the model for planning, it is necessary
% to additionally condition the diffusion process on characteristics $\bs{y}(\tau)$. Many approaches are designed for conditional generation of diffusion models~\cite{batzolis2021conditional}. One such approach for conditional generation is to train a classifier $p_\phi(\bs{y}(\tau)|\bs{x}_k(\tau))$ to predict $\bs{y}(\tau)$ from noise trajectory $\bs{x}_k(\tau)$. This approach needs a clear semantic meaning of the condition.  In the case that $\bs{y}(\tau)$ represents the return under a bidding trajectory, it needs to estimate a Q-function, which requires a separate, complex dynamic programming procedure, and reduces the training stability.

% Different from the above approach for image generation, a more sophisticated strategy should be designed for generating bidding. 

Following~\cite{ho2021classifier,ajay2022conditional} we use a classifier-free guidance strategy with low-temperature sampling to guide the generation of bidding, to extract high-likelihood trajectories in the dataset. During the training phase, we jointly train the unconditional model $\epsilon_\theta(\bs{x}_{k}(\tau),k)$ and conditional model $\epsilon_\theta(\bs{x}_{k}(\tau),\bs{y}(\tau),k)$ by randomly dropping out conditions. During generation, a linear combination of conditional and unconditional score estimates is used:
\begin{equation}
\hat{\epsilon}_k:=\epsilon_\theta(\bs{x}_k(\tau),k)+\omega\left( \epsilon_\theta\left( \bs{x}_k(\tau),\bs{y}(\tau),k \right) - \epsilon_\theta \left ( \bs{x}_k(\tau), k \right) \right),
\label{eq:perturbed_noise}
\end{equation}
\noindent where the scale $\omega$ is applied to extract the most suitable portion of the trajectory in the dataset that coappeared with $\bs{y}(\tau)$. After that, we can sample from DiffBid to produce bidding parameters through sampling from $p_\theta(\bs{x}_{k-1}(\tau)|\bs{x}_{k}(\tau),\bs{y}(\tau))$:
\begin{equation}
    \bs{x}_{k-1}(\tau)\sim\mathcal{N}\left (\bs{x}_{k-1}(\tau)|\bs{\mu}_{\theta}\left(x_k(\tau),\bs{y}(\tau),k\right),\bs{\Sigma}_{\theta}\left(\bs{x}_k(\tau),k\right ) \right )
\end{equation}
\noindent where a widely used parameterization here is  $\bs{\mu}_\theta(\bs{x}_k(\tau),\bs{y}(\tau),k)=\frac{1}{\sqrt{\alpha_k}}(\bs{x}_k(\tau)-\frac{\beta_k}{\sqrt{1-\overline{\alpha}_k}}\hat{\epsilon}_k )$ and $\Sigma_\theta(\cdot)=\beta_k$, in which $\alpha_k=1-\beta_k$ and $\overline{\alpha_k}=\prod_{i=1}^k\alpha_k$. When serving at time period $t$, the agent first sample a initial trajectory $x'_K(\tau)\sim\mathcal{N}(0,I)$ and assign the history states $\bs{s}_{0:t}$ into it. Then, we can sample predicted states with the reverse process recursively by
\begin{equation}
    \bs{x}'_{k-1}(\tau)=\bs{\mu}_\theta(\bs{x}'_k(\tau),\bs{y}(\tau), k)+\sqrt{\beta_k}\bs{z}
    \label{eq:sample}
\end{equation}
\noindent where $\bs{z}\sim\mathcal{N}(0,I)$. Given $\bs{x}'_0(\tau)$, we can extract the next predicted state $s'_{t+1}$, and determine how much it should bid to achieve that state. In this setting, we apply widely used inverse dynamics~\cite{agrawal2016learning,pathak2018zero} with non-Markovian state sequence to determine current bidding parameters at time period $t$:
\begin{equation}
    \bs{\hat{a}}_{t}=f_\phi(\bs{s}_{t-L:t},\bs{s}'_{t+1}),
\end{equation}
\noindent where $\bs{\hat{a}}_{t}\in\mathbb{R}^J$ contains predicted bidding parameters~(i.e. $\lambda_i, \ i=1,...,n$) at time $t$. $L$ is the length of history states. The inverse dynamic function $f_\phi$ can be trained with the same offline logs as the reverse process. This design disentangles the learning of states and actions, making it easier to learn the connection between states thus achieving better empirical performance. The overall procedure is summarized in Algorithm~\ref{algo:generation}.

\subsection{DiffBid Training}
\label{sec:training}

Following~\cite{ho2020denoising}, we train DiffBid to approximate the given noise and the returns in a supervised manner. Given a bidding trajectory $\bs{x}_0(\tau)$, we have its corresponding returns e.g., values the advertiser received, the constraint the model should obey and the history states $s_l,l=1,...,t$ before time $t+1$. Then we just train the reverse process model $p_\theta$ which is parameterized through the noise model $\epsilon_\theta$ and the inverse dynamics $f_\phi$ through:
\begin{equation}
\begin{split}
\mathcal{L}(\theta,\phi)&=\mathbb{E}_{k,\tau\in\mathcal{D}}\left [ ||\epsilon-\epsilon_\theta(\bs{x}_k(\tau),\bs{y}(\tau),k)||^2\right ] \\
&+\mathbb{E}_{(\bs{s}_{t-L:t},\bs{a}_t,s'_{t+1})\in\mathcal{D}}\left[ ||\bs{a}_t-f_\phi(\bs{s}_{t-L:t},\bs{s}'_{t+1})||^2 \right],
\end{split}
\label{eq:training}
\end{equation}
In the training process, we randomly sample a bidding trajectory $\tau$ and a time step $k$, then we construct a noise trajectory $\bs{x}_k(\tau)$ and predict the noise through Eq~(\ref{eq:perturbed_noise}). Following~\cite{ho2021classifier}, we randomly drop conditions $\bs{y}(\tau)$ with probability $p$ to train DiffBid to enhance the robustness. The process is presented in Algorithm~\ref{algo:training}. 

\subsection{Design of Conditions.}

In this section, we present approaches transforming industrial metrics into conditions of DiffBid.
\subsubsection{\textbf{Generation with Returns}} For each trajectory $\tau$ we have the total value the advertiser received as the the return $R(\tau)=\sum_{t=1}^T r_t$. We normalize the return by:
\begin{equation}
\label{eq:reward}
    R=\frac{R(\tau)-R_{\text{min}}}{R_{\text{max}}-R_{\text{min}}},
\end{equation}
\noindent where $R_{\text{min}}$ and $R_{\text{max}}$ are the smallest and the largest return in the dataset. Through Eq.~\ref{eq:reward} we normalize the return into $[0,1]$ and merge it into $y(\tau)$. Subsequently, we train the model to generate trajectories conditioned on the normalized returns. It should be noted that trajectories with more values received have higher normalized returns. Thus $R=1$ indicates the best trajectory with the highest values which will better fit the advertisers' needs. When generation, we just set $R=1$ and generate the trajectory under the max return condition to the advertiser. 

% $C$ is a constant to control the value of $\mathcal{R}$ to make the returns more uniformly distributed. We use $C=1$ in all the cases in our experiment.

\subsubsection{\textbf{Generation with Constraints or Human Feedback}} In MCB, the cumulative performance related to the constraints within a given episode should be controlled so as not to exceed the advertisers' expectations. In such a setting, we can design $\bs{y}(\tau)$ to control the generation process. For example, in the Target-CPC setting, we can maintain a binary variable $E$ to indicate whether the final CPC exceeds the given constraint $C$:
\begin{equation}
    E = \text{I}_{x\le C}(x)
\label{eq:constraint}
\end{equation}
where $x=\frac{\sum_ic_{i}o_i}{\sum_ip_io_i}$ is defined in Eq~(\ref{eq:problem}). We can then normalize $x$ into $[0,1]$ through min-max normalization for simplification. $E$ can be used to indicate whether trajectory $\tau$ break the CPC constraint. We can also design $\bs{y}(\tau)$ to include $E=1$ to make the model generate bids that do not break the CPC constraint. Sometimes it is also important to adjust the bidding parameters given real-time feedback provided by the advertiser to enable flexibility. Here we use two example indicators that reflect the experience of advertisers:
\begin{enumerate}
    \item \textbf{Smoothness}: an advertiser may expect the cost curve as smooth as possible to avoid sudden change. By defining $x=\frac{1}{T}\sum_{t}\left|cost_{t}-cost_{t-1}\right|$, we can model it as a binary variable $S$ indicating whether the max cost change between adjacent time period exceeds a threshold as in Eq~(\ref{eq:constraint}).
    \item \textbf{Early/Late Spend}: an advertiser may expect the budget to be cost in the morning or in the evening when there are promotions. Here we model the ratio of cost in the early half day through $x=\frac{\sum_{t=0}^{T/2}{cost_i}}{\sum_{t=0}^{T}{cost_i}}$, and use a binary variable to indicate whether the spend in the early half day exceeds a certain threshold $C$ as in Eq~(\ref{eq:constraint}).
\end{enumerate}
We can also compose several constraints together to form $\bs{y}(\tau)$ to guide the model to generate bid parameters that adhere to different constraints. In this setting, $\bs{y}(\tau)$ will be a vector.
% \subsubsection{\textbf{Generation with Human Feedback}} 

% Sometime an advertiser may provide feedback to the advertising platform to 
% \begin{itemize}
%     \item Average Cost Rate~(ACR)
%     \item Cost Early
%     \item Cost Late
% \end{itemize}

\subsection{Complexity Analysis}
\label{sec:complexity}
The complexity analysis for training DiffBid consists of the training process and the inference process. For training, given the time complexity of the noise prediction model $\epsilon_\theta$ is $\mathcal{O}(T_1)$, the complexity for the inverse dynamic model $f_\phi$ is $\mathcal{O}(T_2)$, the complexity for a training epoch is $\mathcal{O}(|\mathcal{B}|(T_1+T_2))$. It can be seen that the training complexity is linear with the input given $T_1$ and $T_2$ are relatively fixed. Thus the training of DiffBid is efficient. For generation, given the total diffusion step $K$, the trajectory length $L$, then the time complexity for inference is $\mathcal{O}(KL(T_1+T_2))$. We can observe that the time complexity for inference is linearly scaled with the diffusion step $K$. In image generation, $K$ is usually very large to ensure good generation quality, which brings the problem of non-efficiency. However, for bidding generation, we find $K$ needs not to be very large. Relatively small $K$ has already generated promising results. Moreover, in auto-bidding, a higher tolerance for latency is acceptable, enabling the use of relatively larger $K$.
\section{Theoretical Analysis}

In this section, we theoretically analyze the property of DiffBid. In specific, we show that DiffBid that utilize MLE as the objective has a corresponding non-Markovian decision problem~\cite{majeed2018q,qin2023learning,gaon2020reinforcement,mutti2022importance}. The detailed proofs can be found in the Appendix~\ref{ap:theory}.

% \begin{definition}[History-based Decision Process~(HDP)]
% HDP is a stochastic mapping from a history-action pair to observation-reward pairs. Formally, $P : \mathcal{H}^* \times \mathcal{A}\rightarrow \mathcal{O} \times \mathcal{R}$, where denotes a stochastic mapping.
% \end{definition}

% \begin{definition}[Q-value Uniform Decision Process~(QDP)] QDP is a special kind of HDP that satisfies the following condition. For any action
% a, if any two histories $h$ and $\dot{h}$ map to the same state s, then the optimal $Q$-values of the underlying HDP of these histories are the same, i.e. state-uniform; $Q^*(h, a) = Q^*(\dot{h}, a)$.
% \end{definition}

% \begin{lemma}[Q-learning Convergence in QDP]
% Under Assumptions that the state-process is ergodic and the rewards are bounded, with a learning rate sequence $(\alpha_t)_{t\in\mathbb{N}}$ satisfying ergodicity, the sequence $(q_t)_{t\in\mathbb{N}}$ generated by the
% Q-learning converges to $q^*= Q^*$.
% \end{lemma}

\begin{lemma}[MLE as non-Markovian decision-making]
Assuming the Markovian transition $p_{\gamma^*}(s_{t+1}|s_t, a_t)$ is known, the ground-truth conditional state distribution $p^*(s_{t+1} |s_{0:t})$ for demonstration sequences is accessible, we can construct a non-Markovian sequential decision-making problem, based on a reward function $r_{\alpha}(s_{t+1},s_{0:t}):={\rm{log}}\int p_{\alpha}(a_t|s_{0:t})p_{\gamma^*}(s_{t+1}|s_t,a_t)d a_t$ for an arbitrary energy-based policy $p_\alpha(a_t|s_{0:t})$. Its objective is
\begin{equation}
    \sum_{t=0}^T\mathbb{E}_{p^*(s_{0:t})}\left [ V^{p_\alpha}(s_{0:t}) \right ]=\mathbb{E}_{p^*(s_{0:T})}\left [ \sum_{t=0}^{T}\sum_{k=t}^T r_{\alpha}(s_{k+1};s_{0:k}) \right ]
\end{equation}
$V^{p_\alpha}(s_{0:t}):=\mathbb{E}_{p^*(s_{t+1:T}|s_{0:t})}[\sum_{k=1}^Tr_\alpha(s_{t+1};s_{0:t})]$ is the value function of $p_\alpha$. This objective yields the save optimal policy as the Maximum Likelihood Estimation  $\mathbb{E}_{p^*(s_{0:T})}\left [{\rm{log}} p_\theta(s_{0:T})\right ]$.
 \end{lemma}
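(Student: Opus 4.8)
The plan is to show that, once both objectives are written out, the stated reinforcement-learning objective and the MLE objective $\mathbb{E}_{p^*(s_{0:T})}[\log p_\theta(s_{0:T})]$ reduce to non-negative-weighted sums of the \emph{same} per-timestep terms, so they must share the same maximizer even though they are not equal as functions.

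\textbf{Step 1 (the policy induces a conditional state law).} Because the Markovian transition $p_{\gamma^*}$ is fixed and known, the energy-based action policy $p_\alpha(a_t\mid s_{0:t})$ induces a conditional \emph{state} transition
\[
p_\alpha(s_{t+1}\mid s_{0:t}) \;:=\; \int p_\alpha(a_t\mid s_{0:t})\, p_{\gamma^*}(s_{t+1}\mid s_t,a_t)\, da_t ,
\]
so that the reward is, by definition, exactly $r_\alpha(s_{t+1};s_{0:t}) = \log p_\alpha(s_{t+1}\mid s_{0:t})$. This identification of "reward" with a conditional log-likelihood is the pivot of the whole argument.

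\textbf{Step 2 (collapse the value-function sum).} Expanding $V^{p_\alpha}(s_{0:t}) = \mathbb{E}_{p^*(s_{t+1:T}\mid s_{0:t})}\big[\sum_{k=t}^{T} r_\alpha(s_{k+1};s_{0:k})\big]$ (the consistent reading of the displayed definition), applying $\mathbb{E}_{p^*(s_{0:t})}$ and using the tower rule $\mathbb{E}_{p^*(s_{0:t})}\mathbb{E}_{p^*(s_{t+1:T}\mid s_{0:t})}[\cdot]=\mathbb{E}_{p^*(s_{0:T})}[\cdot]$, and summing over $t$, yields the claimed identity
\[
\sum_{t=0}^{T}\mathbb{E}_{p^*(s_{0:t})}\!\big[V^{p_\alpha}(s_{0:t})\big] = \mathbb{E}_{p^*(s_{0:T})}\!\Big[\sum_{t=0}^{T}\sum_{k=t}^{T} r_\alpha(s_{k+1};s_{0:k})\Big].
\]
I would then exchange the order of summation: each term $r_\alpha(s_{k+1};s_{0:k})$ is counted once for every $t\le k$, i.e. $k+1$ times, so the right-hand side equals $\sum_{k=0}^{T}(k+1)\,\mathbb{E}_{p^*(s_{0:k+1})}[\log p_\alpha(s_{k+1}\mid s_{0:k})]$.

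\textbf{Step 3 (match against MLE and compare argmaxes).} Factor $p_\theta(s_{0:T}) = p_\theta(s_0)\prod_{t=0}^{T-1} p_\theta(s_{t+1}\mid s_{0:t})$ autoregressively and parameterize the conditionals by the induced policy, $p_\theta(s_{t+1}\mid s_{0:t}) = p_\alpha(s_{t+1}\mid s_{0:t})$, with $p_\theta(s_0)$ a constant in the policy; then
\[
\mathbb{E}_{p^*(s_{0:T})}[\log p_\theta(s_{0:T})] = \mathrm{const} + \sum_{t=0}^{T-1}\mathbb{E}_{p^*(s_{0:t+1})}[\log p_\alpha(s_{t+1}\mid s_{0:t})].
\]
Both objectives are therefore non-negative-weighted sums of the common terms $g_k(\alpha):=\mathbb{E}_{p^*(s_{0:k})}\big[\mathbb{E}_{p^*(s_{k+1}\mid s_{0:k})}[\log p_\alpha(s_{k+1}\mid s_{0:k})]\big]$, with weight $1$ (MLE) or $k+1$ (RL objective). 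Writing the inner expectation as $-H\big(p^*(\cdot\mid s_{0:k})\big) - D_{\mathrm{KL}}\!\big(p^*(\cdot\mid s_{0:k})\,\|\,p_\alpha(\cdot\mid s_{0:k})\big)$ and noting the entropy is free of $\alpha$, Gibbs' inequality (non-negativity of KL) shows each $g_k$ is maximized exactly at $p_\alpha(\cdot\mid s_{0:k}) = p^*(\cdot\mid s_{0:k})$. Under the (implicit) realizability assumption that the energy-based class is rich enough for some $\alpha^\star$ to induce $p_{\alpha^\star}(s_{t+1}\mid s_{0:t}) = p^*(s_{t+1}\mid s_{0:t})$ at all $t$ and histories — which is available because the demonstrations were themselves produced by some policy acting through $p_{\gamma^*}$ — this $\alpha^\star$ simultaneously maximizes every $g_k$, hence both weighted sums, giving the common optimal policy.

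\textbf{Main obstacle.} The genuinely delicate point is the weight mismatch: the RL objective carries the telescoped factors $(k+1)$ while MLE is uniform, so the two are not the same function and the equivalence holds only at the level of the maximizer. The argument that this suffices relies on (i) the additive-over-time decomposition, so each conditional can be tuned independently, and (ii) realizability of $p^*$ within the policy class \emph{through the fixed transition} $p_{\gamma^*}$, so the common per-step optimum is actually attainable; without (ii) the per-step optima need not be jointly achievable and the claim could fail, so I would state it explicitly. (I would also flag the apparent typo in the displayed definition of $V^{p_\alpha}$, which must read $\sum_{k=t}^{T} r_\alpha(s_{k+1};s_{0:k})$ for the identity in Step 2 to hold.)
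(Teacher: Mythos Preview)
Your argument is correct, but it follows a markedly more direct route than the paper's. The paper proceeds structurally: starting from the MLE optimum it defines $V^*(s_{0:t})$ as the expected optimal log-likelihood-to-go, derives a Bellman optimality equation for it with reward $\log p^*(s_{t+1}\mid s_{0:t})$, augments this reward by $\log p_\alpha^*(a_t\mid s_{0:t})$ to build a $Q^*$-function, and then verifies through three auxiliary lemmas that (i) the energy-based policy with energy $Q^*-V^*$ recovers the optimal policy, (ii) $V^\alpha$ and $\mathbb{E}_{p_\alpha}[Q^\alpha]$ yield the same optimizer once the entropy-maximization property of energy-based policies is invoked, and (iii) soft $Q$-learning / soft policy iteration converge to that optimum (citing Ziebart). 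In short, the paper assembles the full algebraic skeleton of an RL problem and checks each joint.

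You instead go straight to the content: identify $r_\alpha$ with the induced conditional log-likelihood, swap sums to expose both objectives as positive-weighted combinations of the same per-step cross-entropies $g_k(\alpha)$, and conclude via Gibbs' inequality that every $g_k$ peaks at $p_\alpha(\cdot\mid s_{0:k})=p^*(\cdot\mid s_{0:k})$, hence so do both weighted sums once realizability is granted. This is more elementary and, usefully, surfaces two things the paper leaves implicit: the weight mismatch (so only the argmax, not the objective itself, is shared) and the realizability hypothesis without which neither argument closes. What the paper's longer route buys is the explicit RL scaffolding---Bellman identities, $Q$-functions, a convergence guarantee---which licenses calling the construction a genuine ``sequential decision-making problem'' rather than just a reweighted loss; what your route buys is clarity and brevity, plus the honest flagging of the $V^{p_\alpha}$ typo and the index issue at $k=T$.
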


\noindent\textbf{Remarks.} This analysis shows that DiffBid utilizing MLE objective has its corresponding non-Markovian decision problem, and their optimal are equivalent. It means that DiffBid does not require the MDP assumption of problems and thus is more powerful in handling randomness and sparse return like in the advertising environment.
\section{Experiments}

\begin{table*}[t]
    \centering
    \caption{Performance Comparison with baselines in different settings, including different data scales, and budgets in Max Return bidding. \textit{improv} indicates the relative improvement of DiffBid against the most comparative baseline. The best results are bolded and the best second results are underlined.}
    \begin{tabular}{c|c|ccccccc}
    \toprule
    \textbf{Training Dataset}  & \textbf{Budget} & \textbf{USCB} & \textbf{BCQ} & \textbf{CQL} & \textbf{IQL} & \textbf{DT} & \textbf{DiffBid} & \textit{improv} \\
    \midrule
       \multirow{4}{*}{USCB-5K} & 1.5K & 454.25 & 454.72 & 461.82 & 456.80 & \underline{477.39} & \textbf{480.76} & 0.71\% \\
       & 2.0K & 482.67 & 483.50 & 475.78 & 486.56 & \underline{507.30} & \textbf{511.17} & 0.76\% \\
       & 2.5K & 497.66 & 498.77 & 481.37 & 518.27 & \underline{527.88} & \textbf{531.29} & 0.65\% \\
       & 3.0K & 500.60 & 501.86 & 491.36 & 549.19 & \underline{550.66} & \textbf{556.32} & 1.03\% \\
    \midrule
       \multirow{4}{*}{USCBEx-5K}  & 1.5K & 454.25 & 453.74 & 358.43 & \underline{464.69} & 378.64 & \textbf{475.62} & 2.35\% \\
       & 2.0K & 482.67 & 487.63 & 356.80 & \underline{529.36} & 439.03 &  \textbf{544.38} & 2.84\% \\
       & 2.5K & 497.66 & 510.75 & 356.41 & \underline{613.67} & 505.43 &  \textbf{624.29} & 1.73\% \\
       & 3.0K & 500.60 & 512.18 & 355.42 & \underline{670.65} & 574.79 &  \textbf{678.73} & 1.17\% \\
    \midrule
       \multirow{4}{*}{USCBEx-50K} & 1.5K & 454.25 & \underline{458.64} & 435.06 & 446.23 & 396.24 & \textbf{495.57} & 8.05\% \\
       & 2.0K & 482.67 & 491.72 & 431.49 & \underline{533.58} & 478.29 &  \textbf{551.73} & 3.40\% \\
       & 2.5K & 497.66 & 513.23 & 428.39 & \underline{592.32} & 554.48 &\textbf{606.34} & 2.37\% \\
       & 3.0K & 500.60 & 526.21 & 425.29 & \underline{633.26} & 611.50 & \textbf{644.88} & 1.83\% \\
    \bottomrule
    \end{tabular}
    \label{tab:main_exp}
\end{table*}

\subsection{Experimental Setup}

\subsubsection{\textbf{Experimental Environment}} The simulated experimental environment is conducted in a manually built offline real advertising system~(RAS) as in~\cite{mou2022sustainable}.  Specifically, the RAS is composed of two consecutive stages, where the auction mechanisms resemble those in the RAS. We consider the bidding process in a day, where the episode is divided into 96 time steps. Thus, the duration between any two adjacent time steps $t$ and $t + 1$ is 15 minutes. The number of impression opportunities between time step $t$ and $t + 1$ fluctuates from 100 to 500. Detailed parameters in the RAS are shown in Table~\ref{tab:environment}. We keep the parameters the same for all experiments.

\subsubsection{\textbf{Data Collection}} We use the widely applied auto-bidding RL method USCB in the online environment to generate the bidding logs for offline RL training. This results in a total $5,000$ trajectories for the based dataset and $50,000$ for a larger one. To increase the diversity of the action space, we also randomly make explorations to generate a dataset with more noise. The above process results in three datasets: USCB-5k, USCBEx-5k, and USCBEx-50k, where USCBEx indicates USCB logs with random exploration data.

\subsubsection{\textbf{Baselines.}} We use the state-of-the-art auto-bidding method USCB as well as other 4 recently proposed offline RL methods as our baselines.  The details of the baselines are as follows:

\begin{itemize}
    \item \textbf{USCB}~\cite{he2021unified} an RL method designed for real-time bidding to dynamically adjust parameters to achieve the optimum. It has outperformed many RL baselines and is also the base policy that is used to collect the data for offline training.
    \item \textbf{BCQ}~\cite{fujimoto2019off} a classic offline RL method without interaction with the environment. 
    \item \textbf{CQL}~\cite{kumar2020conservative} address the limitations of offline RL methods by learning a conservative Q-function such that the expected value of a policy under it lower-bounds its true value.
    \item \textbf{IQL}~\cite{kostrikov2021offline} an offline RL method that does not require evaluating actions outside of the dataset, yet it enables substantial improvement of the learned policy beyond the best behavior in the data through generalization
    % \item \textbf{VCQL}~\cite{mou2022sustainable}: a variance-suppressed conservative Q-learning method to effectively and stably learn the auto-bidding policy with the collected data. 
    \item \textbf{DT}~\cite{chen2021decision} a prevalent generative method based on the transformer architecture for sequential decision-making.
\end{itemize}

\subsubsection{\textbf{Implementation Details}} For the implementation of baselines, we use the default hyper-parameters suggested from their papers and also tune through our best effort. For DiffBid, the diffusion steps is searched within $\{5, 10, 20, 30, 50\}$. $\gamma$ is set to 0.008. $L$ is searched in $\{1,2,3\}$. $\omega$ for noise schedule is set to 0.2 empirically. The batch size is set to 2$\%$ of all training trajectories. Total training epochs is set to 500. For the implementation of $p_\theta$, we adopt the most widely used model U-Net for diffusion modeling with hidden sizes of 128 and 256. We use Adam optimizer with a learning rate $1e{-4}$ to optimize the model. The condition dropout ratio is set to 0.2 during training. We update the model with momentum updates over a period of 4 steps. 

\subsubsection{\textbf{Evaluation}} For evaluation, we randomly initialize a multi-agent advertising environment with USCB as the base auto-bidding agents and use other methods to compete with these agents. We test the performance under 4 different budgets, 1500, 2000, 2500, and 3000, to test the generalization under different budget scales. We use the cumulative reward as the evaluation metric, which reflects the total gain received by the target agent. For each method, we randomly initialize 50 times and report the average of top-5 scores.

\subsection{Performance Evaluation}

The performance against baselines is shown in Table~\ref{tab:main_exp}. In this table, we show the cumulative reward from different budgets of all the models. We have the following discoveries. One of the key takeaways from the performance comparison presented in Table~\ref{tab:main_exp} is that offline RL methods consistently outperform the state-of-the-art auto-bidding method, USCB. This finding underscores the advantages of leveraging historical bidding data to train RL agents. Offline RL methods, such as BCQ, IQL, and DT, exhibit superior performance in terms of cumulative rewards across various budget scenarios. The superiority of offline RL methods can be attributed to their ability to learn from past bidding experiences without interaction with a simulation environment. This mitigates the challenges associated with inconsistencies between the online bidding environment and the offline bidding environment, leading to policies that are better aligned with real-world scenarios. Notably, DiffBid stands out as the top-performing approach among all the methods evaluated. In all budget scenarios and training datasets, DiffBid consistently achieves the highest cumulative rewards. This remarkable performance highlights the efficacy of the DiffBid approach in optimizing bidding strategies by directly modeling the correlation with the returns and entire trajectories. By decoupling the computational complexity from horizon length, DiffBid achieves superior decision-making capabilities, outperforming traditional RL methods in both foresight and strategy.
%DiffBid's success can be attributed to its unique approach of modeling the bidding process as a diffusion process which directly constructs 
%By iterative refining and diffusing information through the historical bidding data, DiffBid can make more informed and strategic bidding decisions. 
%This modeling strategy appears to provide a substantial advantage over traditional RL methods and showcases the potential for innovative techniques in the field of automated bidding. 
Another important observation from the results is the impact of training dataset size on model performance. When comparing the "USCB-5K" and "USCBEx-50K" settings, it becomes evident that a larger training dataset consistently leads to improved cumulative rewards. This finding underscores the significance of data size in training RL models for automated bidding. A richer dataset allows the models to capture more diverse bidding scenarios and make more informed decisions, ultimately resulting in better performance. One intriguing aspect of DiffBid's performance is its resilience to noise. In real-world advertising environments, there can be inherent uncertainty and variability in the bidding process due to factors like market dynamics and competitor behavior. DiffBid appears to handle such noise more effectively than the RL baselines. This means that even in situations where bidding outcomes are less predictable, DiffBid manages to maintain competitive performance. 

% This robustness to noise is a valuable characteristic for any automated bidding system, as it ensures consistent and reliable performance in dynamic advertising landscapes.

% Overall, the results emphasize that DiffBid represents a significant advancement in the field of auto-bidding for advertising. It not only outperforms existing RL methods and the state-of-the-art USCB but also demonstrates robustness and adaptability in varying budget and noise conditions. These findings make DiffBid a promising candidate for improving the efficiency and profitability of online advertising.

\subsection{Ablation Study}
\begin{table}[t]
    \centering
    \caption{Ablation Study}
\resizebox{.8\linewidth}{!}{
    \begin{tabular}{ccc}
    \toprule
       Model & USCBEx-5K & USCBEx-50K \\
    \midrule
    DiffBid &  2280.12 & 2395.60 \\
    \midrule
      DiffBid w/o cond  & 1812.64 & 1852.21 \\
      DiffBid w/o non-mkv &   2254.78  & 2287.41 \\
    \bottomrule
    \end{tabular}}
    \label{tab:ablation}
\end{table}
To study different parts of the proposed DiffBid, we run the model without a certain module to see if the removed corresponding module will result in a performance drop. The result of the ablation study is shown in Table~\ref{tab:ablation}. Due to the space limitation, we only provide the results on USCBEx-5K and USCBEx-50K. w/o cond refers to the DiffBid with the condition set to 0.0~(rather than 1.0). w/o non-mkv refers to the situation where we only use the current state and the predicted next state to generate the bidding coefficient. From the table, we find both of the two parts contribute to the final result, and removing either of them will result in a performance drop. It verifies the effectiveness of the proposed methods in boosting DiffBid's performance for auto-bidding.

\subsection{In-depth Analysis}

\subsubsection{\textbf{Study of State Transition}}

Here we compare the state transition of the baseline method USCB and our proposed method DiffBid. The result for grouped and non-grouped state transition during a day is shown in Figure~\ref{fig:state_transition}. In this figure, we plot the budget left ratio with time steps in one day. From the figure, we can observe that under USCB, most of the advertisers' consumption does not exhaust their budget. This is attributed to the inconsistency between the offline virtual environment and the real online environment faced by USCB. On the contrary, the budget completion situation improves under DiffBid, where most of the advertisers spend more than 80\% of their budgets. One possible reason is that DiffBid finds trajectories with a high budget completion ratio will also have a high cumulative reward, and thus tend to generate trajectories with a high budget completion ratio. Moreover, advertisers with small budgets undertend to spend money in the afternoon. This is because the impressions in the afternoon offer a higher cost-effectiveness, albeit with a limited quantity. 

% The above phenomenon shows that DiffBid has the ability of composing different trajectories to increase the overall performance.  

% Therefore, advertisers with small budgets tend to increase the bid price in the afternoon to ensure the overall return, while those with large budgets need to spend at an early stage to ensure the overall budget completion. 
\subsubsection{\textbf{Performance under Constraints and Feedbacks.}}
\begin{figure}[t]
    \centering
    \subfigure[USCB]{\includegraphics[width=.495\linewidth]{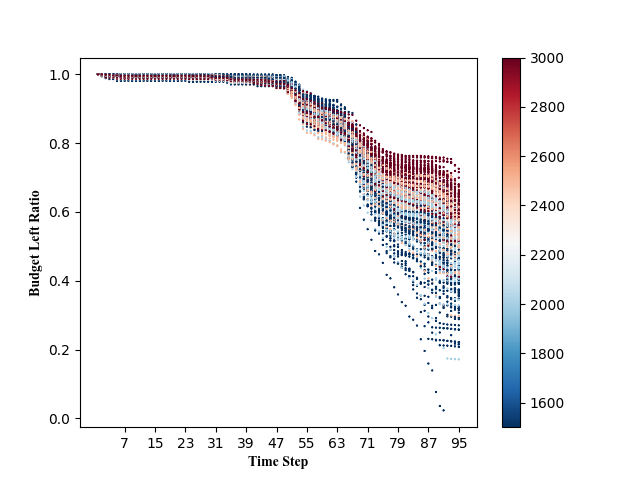}}
    % \subfigure[Grouped States Transition of USCB]{\includegraphics[width=.49\linewidth]{}}
    \subfigure[DiffBid]{\includegraphics[width=.495\linewidth]{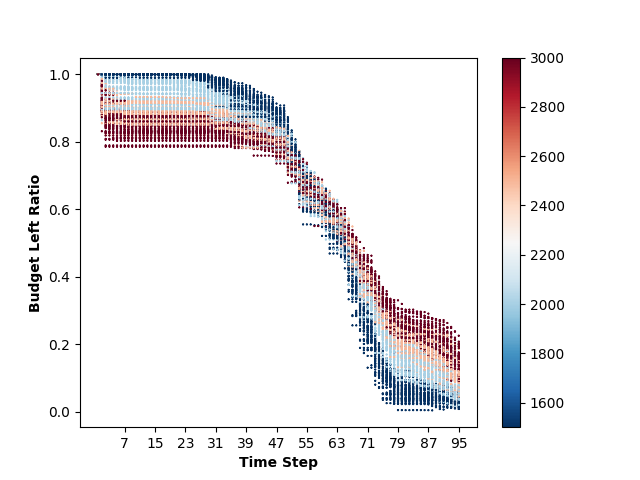}\label{fig:stable}}
    % \subfigure[Grouped States Transition of DiffBid]{\includegraphics[width=.49\linewidth]{}}
    \caption{State Transition in One Episode.}
    \label{fig:state_transition}
\end{figure}
\begin{figure}[t]
    \centering
    \subfigure[IQL]{\includegraphics[width=.495\linewidth]{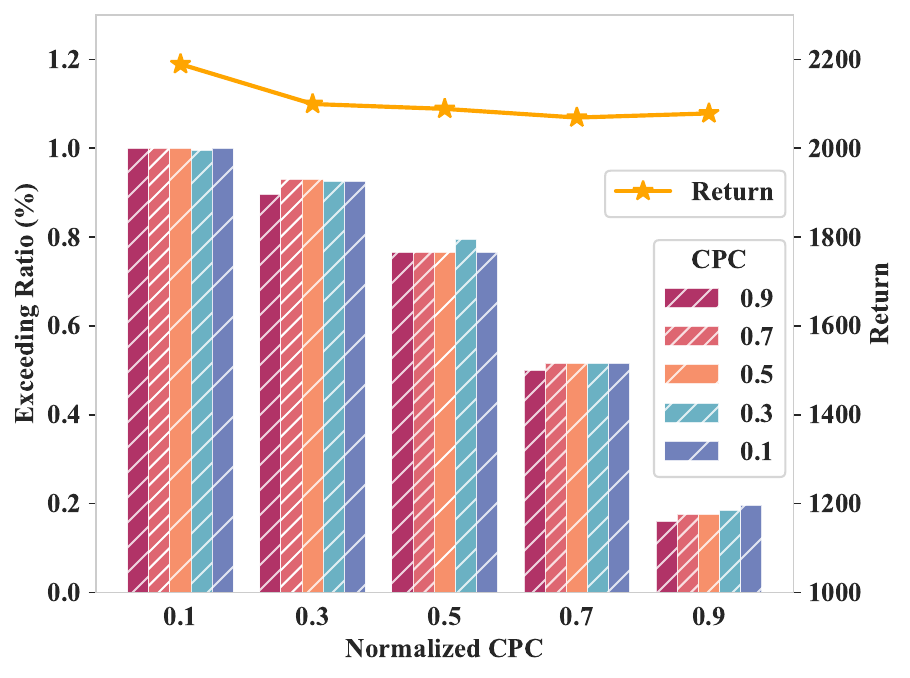}}
    \subfigure[DiffBid]{\includegraphics[width=.495\linewidth]{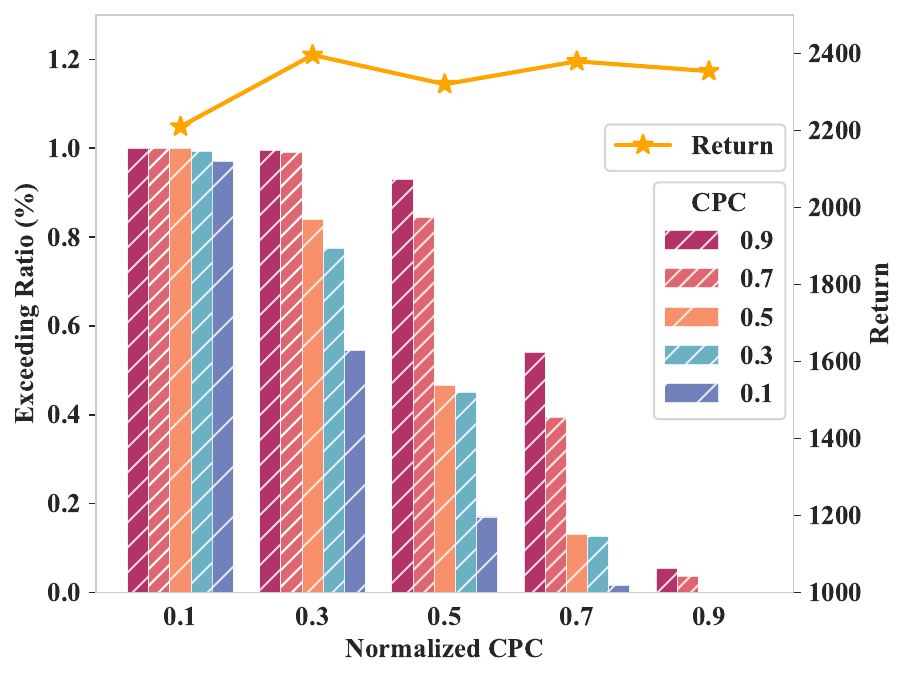}}
    \caption{Performance under CPC constraint.}
    \label{fig:ppc_constraint}
\end{figure}
\begin{figure}[t]
    \centering
    \subfigure[Smoothness]{\includegraphics[width=.50\linewidth]{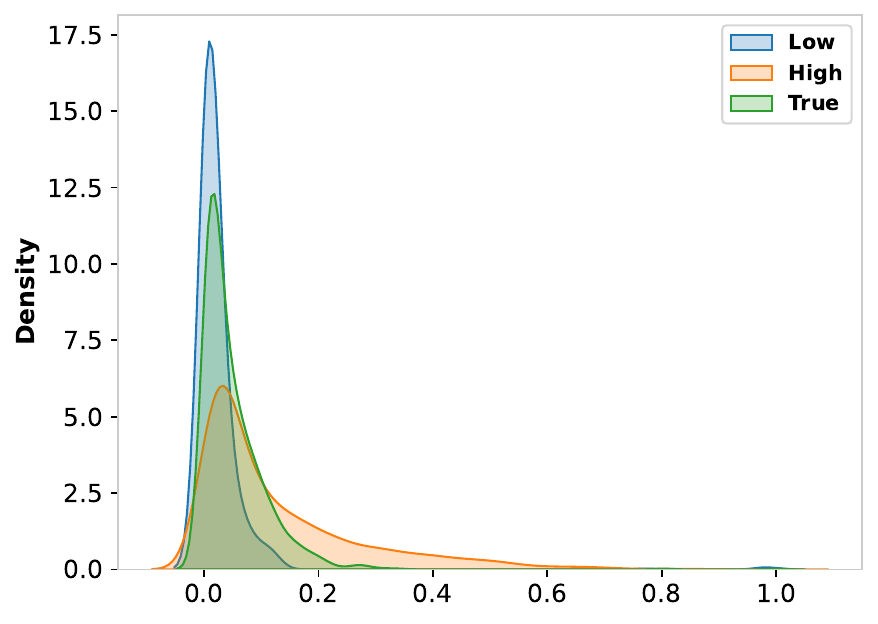}}
    \subfigure[Early Spend]{\includegraphics[width=.49\linewidth]{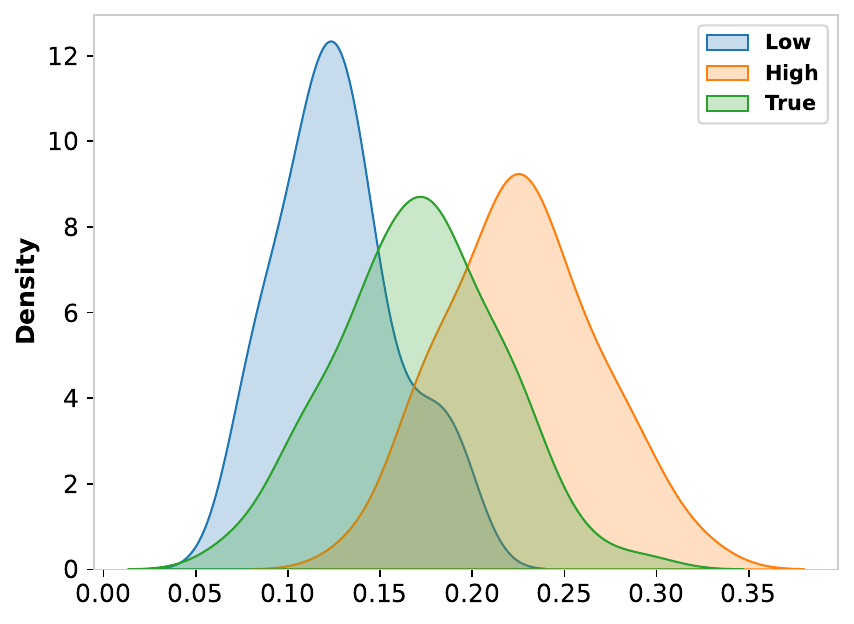}}
    \caption{Performance of Human Feedback.}
    \label{fig:feedback}
\end{figure}
We additionally investigate DiffBid's multi-objective optimization capability under specific constraints, comparing its performance with Offline RL. Specifically, we choose CPC ratio and overall return as metrics and examine the ability of DiffBid and IQL to control the overall CPC exceeding ratio while maximizing the overall return. During training, we set different thresholds of CPC as in Eq~(\ref{eq:constraint}). Then when testing, we make DiffBid generating trajectories under the expected CPC. In Figure~\ref{fig:ppc_constraint}, we show the exceeding ratio and overall return under different CPC constraints and training settings. From the figure, we find that DiffBid has the ability to control diverse levels of exceeding ratio while maintaining an intact return, surpassing IQL by a significant margin. Consequently, DiffBid holds a distinct advantage in effectively addressing MCB problems. We also study the performance under different advertiser feedbacks. During training we split the trajectories through thresholds of Eq.~(\ref{eq:constraint}) into high and low levels, and learn the conditional distribution under different levels. During generation, we adjust the condition and generate corresponding samples and summarize the metrics. The results for the statistic distribution of metrics for low level, high level and the original trajectories are shown in Figure~\ref{fig:feedback}. We find that the trajectory obtained from deploying DiffBid is well controlled by the condition.

\subsubsection{\textbf{Impact of Diffusion Steps}} We also study the overall performance under different diffusion steps, which is an important factor in influencing the efficiency and performance. The overall impact of diffusion steps with respect to different budgets is illustrated in Figure~\ref{fig:steps}. From the figure, we have the following discoveries. First of all, we observe that diffusion steps have a larger impact on advertisers with small budgets (1500 yuan). Secondly, larger budgets are not sensitive to the diffusion steps, where we can get the best result in most situations within 30 diffusion steps.
\begin{figure}[t]
    \centering
    \subfigure[Impact of Diffusion Steps]{\includegraphics[width=.46\linewidth]{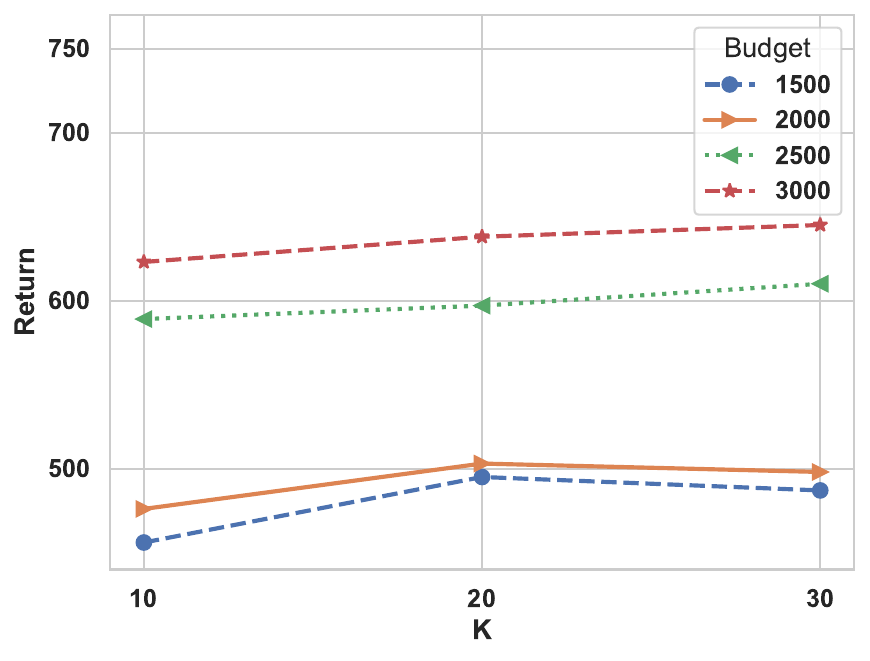}}
    \subfigure[Stability]{\includegraphics[width=.48\linewidth]{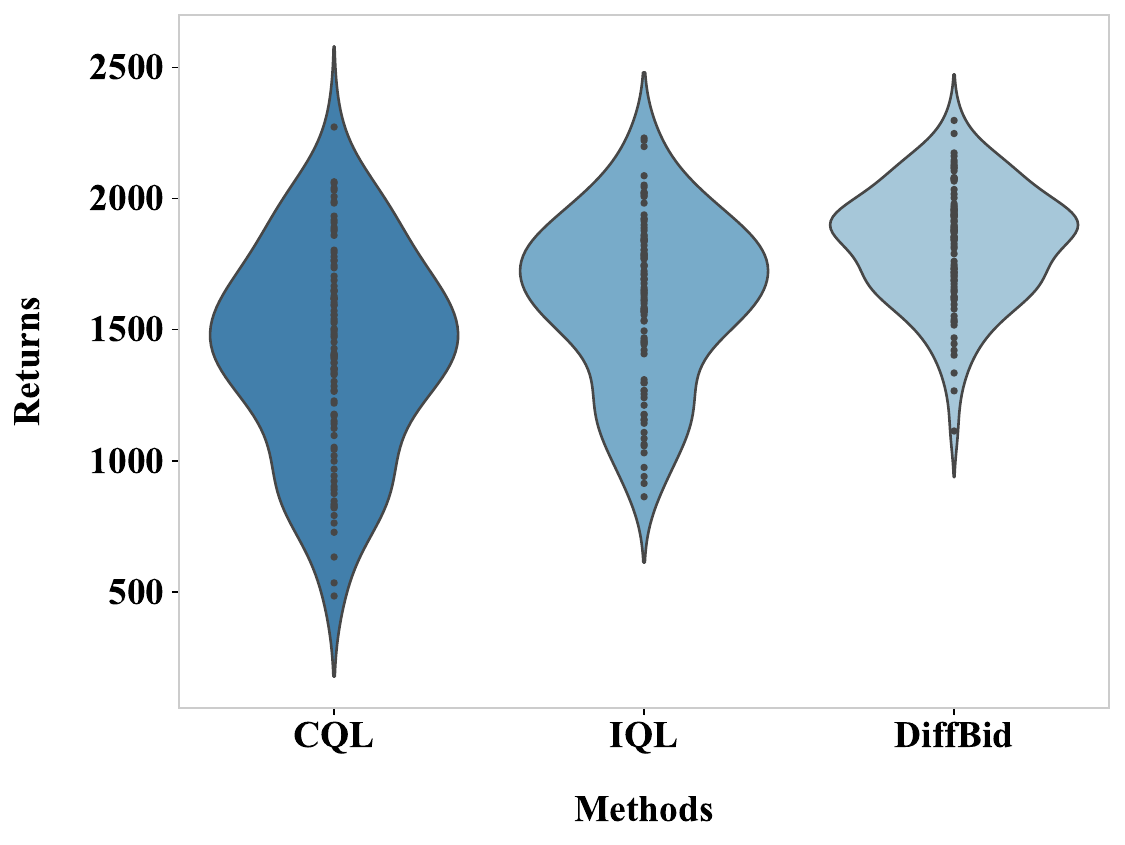}}
    \caption{In-depth Analysis.}
    \label{fig:steps}
\end{figure}
\subsubsection{\textbf{Stability}} 

In this study, we randomly initialized the parameters of three models - CQL, IQL, and DiffBid - and conducted thirty training trials for each to examine stability in performance. As depicted in Figure~\ref{fig:stable}, the RL-based models, CQL and IQL, showed a tendency towards instability under varying random seeds. Notably, IQL demonstrated slightly better performance than CQL, which may be attributed to its design optimized for conservative regularization. Contrasting with these, the generative model DiffBid exhibited remarkable stability, with significantly fewer instances of failure compared to its RL counterparts.

\subsection{Online A/B Test}

To further substantiate the effectiveness of DiffBid, we have deployed it on Alibaba advertising platform for comparison against the baseline
IQL~\cite{kostrikov2021offline} method, which performs best among various auto-bidding methods.
\begin{table}[t]
\Large
    \centering
    \caption{Online A/B Test Result.}
    \resizebox{.95\linewidth}{!}{
    \begin{tabular}{c|cccccc}
    \toprule
        Metrics  & \#Plan & Budget & Cost & Buycnt & GMV & ROI \\
    \midrule
       Baseline  & 2068 & 886744 & 834426.104 & 23584.6836 & 1853823 & 2.221 \\
       DiffBid   & 2068 & 886744 & 829992.384 & 24078.6883 &  1905954 & 2.296 \\
    \textit{compare} & - & - & -0.53\% & +2.09\% & +2.81\% & +3.36\% \\
    \bottomrule
    \end{tabular}}
    \label{tab:online}
\end{table}
The online A/B test is conducted from February 01, 2024, to February 08, 2024. The results are shown in Table~\ref{tab:online}. It shows that DiffBid can significantly improve the Buycnt by 2.09\%, the GMV by 2.81\%, the ROI by 3.36\%, showing its effectiveness in optimizing the overall performance. For efficiency, DiffBid takes 0.2s per request with GPU acceleration while the baseline is 0.07s, which means latency can be well guaranteed.
 
\section{Related Works}

\noindent\textbf{Offline-Reinforcement Learning.} Offline reinforcement learning is a research direction that has gained significant attention in recent years. The primary goal of offline RL is to learn effective policies from a fixed dataset without additional online interaction with the environment. This approach is particularly beneficial when online interaction is costly, risky, or otherwise not feasible.Notable works include Conservative Q-learning~(CQL) by Kumar et al. \cite{kumar2020conservative}, and Batch-Constrained deep Q-learning~(BCQ) by Fujimoto et al. \cite{fujimoto2019off}. Both algorithms aim to tackle overestimation bias which tends to occur in offline RL settings. Kostrikov et al.~\cite{kostrikov2021offline} propose an implicit q-learning approach to address the training instability for CQL. Chen et al.~\cite{chen2021decision} propose to use transformers for offline RL to increase the model capability. Hansen-Estruch et al.~\cite{hansen2023idql} proposes a diffusion-based approach with implicit Q-learning for offline RL.

\noindent\textbf{Diffusion Models.} They recently have shown the capability of high-quality generation~\cite{croitoru2023diffusion}, unconditional generation~\cite{austin2021structured} and conditional generation~\cite{chao2022denoising,huang2022fastdiff}. It has shown promising performance in decision-making. Hansen-Estruch et al.~\cite{hansen2023idql} proposes a diffusion-based approach with implicit q-learning for offline RL. Wang et al.~\cite{wang2022diffusion} propose a expressive policy though diffusion modeling. Chen et al.~\cite{chen2022offline} propose to use diffusion models for behavior modeling. Hu et al.~\cite{hu2023instructed} introduce temporal conditions for trajectory generation. Despite these preliminary explorations, no work has been payed for diffusion based auto-bidding which requires the model to adapt to the random advertising environment. Li et al.~\cite{li2023diga} utilize diffusion model in anti-money laundering.

\noindent\textbf{Auto-bidding.} Auto-bidding systems are widely used in programmatic advertising, where they are employed to automatically place bids on ad spaces. The main focus of such systems is to optimise a given key performance indicator (KPI), such as the number of clicks or conversions, while maintaining a certain budget~\cite{wang2017display}. Cai et al.~\cite{cai2017real} proposed an RL-based approach to the problem of auto-bidding for display advertising. They designed a bidding environment and applied a deep RL algorithm to learn the optimal bidding strategy. He et al.~\cite{he2021unified} propose a unified solution with RL to enable multiple constraints for auto-bidding.  Jin et al. extend the RL to enable multi-agent competition~\cite{jin2018real}. Zhang et al. \cite{mou2022sustainable} also adopted the RL framework for auto-bidding and showed that their approach can outperform traditional bidding strategies. Wen et al.~\cite{wen2022cooperative} propose a multi-agent-based approach for auto-bidding, which enables the modeling of multiple auto-bidding agents at the same time to include more information and also has been deployed online.
\section{Conclusion}

In this paper, we design a new paradigm for auto-bidding through the lens of generative modeling. To achieve this goal, we propose a decision-denoising diffusion approach to generate conditional bidding trajectories and at the same time control the generated samples under certain constraints. This new generative modeling approach enables integrating different kinds of industrial metrics, which is the first unified model for bidding. Extensive experiments on real-world simulation environments demonstrate the effectiveness of the newly proposed approach.  In the future, we will consider developing new methods to accelerate the generation process and new methods to ensure the robustness of DiffBid.  

% \subsection{Theoretical Analysis}
% \label{sec:theoretical}

% \textbf{Theorem 1.} \textit{Approximation error of offline reinforcement learning with Bellman-Ford equation is}

% \newpage

\balance
\bibliographystyle{ACM-Reference-Format}
\bibliography{main}

% \newpage

\appendix

\section{Appendix}

\subsection{Notations}

% The key notations and their definitions are summarized in Table~\ref{tab:notation}.
\begin{table}[h]
    \centering
    \caption{Definition of Notations.}
    \begin{tabular}{c|c}
    \toprule
        Symbol & Definition \\
    \midrule
       $\tau$  &  The trajectory index of a serving policy.  \\
       $\bs{x}(\tau)_k $  &  Sequence of states of trajectory $\tau$ in diffusion step $k$. \\
       $\bs{y}(\tau)$ & Properties or conditions for $\tau$. \\ 
       $R$ & Return of a trajectory. \\
       $E$ & Binary indicator variable. \\
       $B$ & The budget of the advertiser. \\
       $C_i$ & The $i$'s constraint. \\
       $\bs{o}_i$ & Whether the advertiser wins impression $i$. \\
       $\bs{v}_i$ & The true value of the impression $i$. \\
       $\bs{b}_i^*$ & The optimal bidding price for the impression $i$. \\
       $\bs{s}_t$ & The state at time period $t$. \\
       $\bs{\hat{a}}_t$ & Predicted bidding parameters at time period $t$. \\
       $\lambda_i$ & The bidding parameters. \\ 
       $\bs{\epsilon}_\theta$ & The denoising model that predict the noise. \\
       $f_{\phi}$ & The model that generate bids. \\ 
       $\overline{\alpha}_k$ & The cumulative product of $1-\beta_j, j=0,...,k$ \\
       $\beta_k$ & Schedualing factors. \\
       $\alpha_k$ & 1-$\beta_k$. \\
    \bottomrule
    \end{tabular}
    \label{tab:notation}
\end{table}

\subsection{Diffusion Modeling}
\label{sec:diffusion_modeling}
As a kind of generative model, diffusion models~\cite{ho2020denoising,vincent2011connection} use the diffusion process to gradually denoise latent samples to generate the new sample and have been widely used in generating pictures, videos, and audio. One of the widely used diffusion models, denoising diffusion probabilistic model~(DDPM), consists of two processes:

~\

\noindent\textbf{Forward process.} In the forward process, the noise is gradually added to the latent variable, which is parameterized by a Markov chain with the transition $q(\boldsymbol{x}_k|\bs{x}_{k-1})=\mathcal{N}\left (x_k;\sqrt{1-\beta_k}\bs{x}_{k},\beta_kI \right )$, where $k\in\{1,...K\}$ refers to the diffusion step, and $\beta_k\in(0,1)$ is a pre-defined scale that controls the noise scale at step $k$. By defining $\overline{\alpha}_k=\prod_{i=1}^k\alpha_i=\prod_{i=1}^k(1-\beta_i)$, we can have the conditional distribution:
\begin{equation}
    q(x_k|x_{0})=\mathcal{N}\left(x_k;\sqrt{\overline{\alpha}_k}x_0,(1-\overline{\alpha}_k)I\right)
\end{equation}
In this paper, we apply cosine noise schedule to control the noise by:
\begin{equation}
    \overline{\alpha}_k=\frac{g(t)}{g(0)}=\frac{\text{cos}\left ( \frac{k/K+\gamma}{1+\gamma}\cdot\frac{\pi}{2} \right )}{\text{cos}\left ( \frac{\gamma}{1+\gamma}\cdot\frac{\pi}{2} \right )},
\end{equation}
where $\gamma$ is a constant.
When $K\rightarrow \infty$, $q(x_K)$ approaches to a standard Gaussian distribution~\cite{ho2020denoising}. Given the original trajectory $x_0$ and $\epsilon\sim\mathcal{N}(0,I)$, we have the noisy version at $k$ by $ x_k=\sqrt{\overline{\alpha}_k}x_0+\epsilon\sqrt{1-\overline{\alpha}_k} $

~\

\noindent\textbf{Reverse process.} In the reverse process, diffusion models plan to remove the added noise on $x_k$ and recursively recover $x_{k-1}$. To achieve this goal, a Gaussian distribution parameterized by $p_\theta(\boldsymbol{x}_{k-1}|\boldsymbol{x}_k)=\mathcal{N}\left( \boldsymbol{x}_{k-1}|\boldsymbol{\mu}_\theta(\boldsymbol{x}_k,k),\boldsymbol{\Sigma}_\theta(\boldsymbol{x}_k,k) \right)$ is learned, where $\boldsymbol{\mu}_\theta(\boldsymbol{x}_k,k)$ is the learned mean and $\boldsymbol{\Sigma}_\theta(\boldsymbol{x}_k,k)$ is the learned covariance of the Gaussian distribution parameterized by a neural network with parameter $\theta$. For generating new samples, we can simply use re-parameterization trick~\cite{kingma2015variational} to sample a noise $\boldsymbol{x}_K\sim \boldsymbol{\mu}_K+\epsilon \sigma_K$ and recursively denoise the sample by $p_\theta(\bs{x}_{k-1}|\bs{x}_k)$ for generation.

~\

\noindent\textbf{Optimization.} DDPM optimizes the Evidence Lower BOund~(ELBO) of generative models. In the context of the diffusion model, we can take the latent samples as hidden variables and rewrite ELBO in the following form:
\begin{equation}
\small
    \begin{split}
        & \mathbb{E}_q\left [-\log p_\theta(\boldsymbol{x_0})\right ] \\
        \le &\mathbb{E}_q\left [ -\text{log}\frac{p_\theta(\boldsymbol{x}_{0:K})}{q(\boldsymbol{x}_{1:K}|\boldsymbol{x}_0)} \right ] \\
        =&\mathbb{E}_q\left [ D_{KL}( q(\boldsymbol{x}_K|\boldsymbol{x}_0)||p_\theta(\boldsymbol{x}_K)) \right ] -\mathbb{E}_q\left [ \text{log}p_\theta \left (\boldsymbol{x}_0|\boldsymbol{x}_1 \right ) \right ]  \\
        +& \mathbb{E}_q\left [ \sum_{t>1}D_{KL}\left (q(\boldsymbol{x}_{k-1}|\boldsymbol{x}_k,\boldsymbol{x}_0)||p_\theta(\boldsymbol{x}_{k-1}|\boldsymbol{x}_k) \right ) \right ], \\
    \end{split}
\end{equation}
\noindent where the first term has no learned variable given variance $\beta_k$ is fixed to constants, thus can be ignored during training. The second term is the reconstruction term where $p_\theta(\cdot)$ is trained to recover the original sample $\boldsymbol{x}_0$ from the noise sample $\boldsymbol{x}_{1}$. The last term is the denoising term where $p_\theta(\cdot)$ is trained to denoise $\boldsymbol{x}_k$ to get $\boldsymbol{x}_{k-1}$, thus we can recurrently denoise the latent samples. In the original paper~\cite{ho2020denoising} the author shows that the last term can be simplified to the noise prediction objective $\mathbb{E}_{k,\boldsymbol{x}_0,\boldsymbol{\epsilon}}\left [ ||\boldsymbol{\epsilon}-\boldsymbol{\epsilon}_\theta \left ( \sqrt{\overline{\alpha}_k}\boldsymbol{x}_0+\sqrt{1-\overline{\alpha}_k}\boldsymbol{\epsilon},k \right )|| \right ]$, where $\overline{\alpha}_k=\prod_{i=1}^{k}\alpha_k=\prod_{i=1}^{k}(1-\beta_k)$.

\subsection{Model Configuration}

We parameterize the noise model $\epsilon_\theta$ with a temporal U-Net~\cite{ronneberger2015u}, consisting of 3 repeated residual blocks. Each block is consisted of two temporal convolutions, followed by group normalization~\cite{wu2018group}, and a final Mish activation function~\cite{misra2019mish}. Timestamp and condition embeddings, both 128-dimensional vectors, are produced by separate 2-layered MLP~(with 256 hidden units and Mish activation function) and are concatenated together before getting added to the activation functions of the first temporal convolution within each block. $f_\phi$ is parameterized with a 3-layer MLP. 

\subsection{Pseudo-code}

The process of training and inference of DiffBid is shown in Algorithm~\ref{algo:training} and Algorithm~\ref{algo:generation} respectively. 

\begin{algorithm}[h]
\caption{Bid Generation with DiffBid.}
\begin{algorithmic}[1]
    \Require noise model $\epsilon_\theta$, inverse dynamics $f_\phi$, guidance scale $\omega$, condition $\bs{y}$, max diffusion step $K$, scales $\beta_t, \ t=1,...,K$.
    \Ensure Bidding parameters $\bs{a}_t$
    \State Get history of states $\bs{s}_{0:t}$;
        % \State Observe the state after previous bidding $s_t$;
        \State Sample $\bs{x}_K(\tau)\sim\mathcal{N}
        (0,\beta_K I)$;
        \For{$k=K,...,0$}
            \State $\bs{x}_k(\tau)[:t]\leftarrow\bs{s}_{0:t}$
            \State Estimating noise $\hat{\epsilon}$ through Eq.~(\ref{eq:perturbed_noise})
            \State $\left(\mu_{k-1},\Sigma_{k-1}\right)\leftarrow\text{Denoise}(\bs{x}_k(\tau),\hat{\epsilon}_k)$
            \State $\bs{x}_{k-1}\sim\mathcal{N}(\mu_{k-1},\alpha\Sigma_{k-1})$
        \EndFor
        \State Extract $(\bs{s}_{t-L:t},\bs{s}'_{t+1})$ from $\bs{x}_0(\tau)$
        \State Generate $\bs{\hat{a}}_t=f_\phi(\bs{s}_{t-L:t},\bs{s}'_{t+1})$; \\
    \Return $\bs{\hat{a}}_t$.
\end{algorithmic}
\label{algo:generation}
\end{algorithm}

\begin{algorithm}[h]
\caption{Training of DiffBid.}
\begin{algorithmic}[1]
    \Require randomly initialized $\theta$, $\phi$, bidding trajectory set $\mathcal{D}$
    \Ensure optimized $\theta$, $\phi$ 
    \While{not converge}
        \State Sample a batch of trajectories $\mathcal{B}\in\mathcal{D}$;
        \For{\textbf{all} $\tau\in\mathcal{B}$}
        \State Sample $k\sim\text{Uniform}(1,K)$, $\epsilon\sim\mathcal{N}(0,I)$;
        \State Compute $\bs{x}_k(\tau)$ via $q(\bs{x}_k(\tau)|\bs{x}_0(\tau))$ in Eq~(\ref{eq:add_noise});
        \State Compute $\mathcal{L}(\theta,\phi)$ by Eq~(\ref{eq:training});
        \State Perform gradient descent to optimize $\theta$ and $\phi$;
        \EndFor
    \EndWhile \\
    \Return optimized $\theta$, $\phi$
\end{algorithmic}
\label{algo:training}
\end{algorithm}

\subsection{Analytical Results for Action Control}
\label{sec:action_control}
\begin{figure}[t]
    \centering
        \includegraphics[width=.8\linewidth]{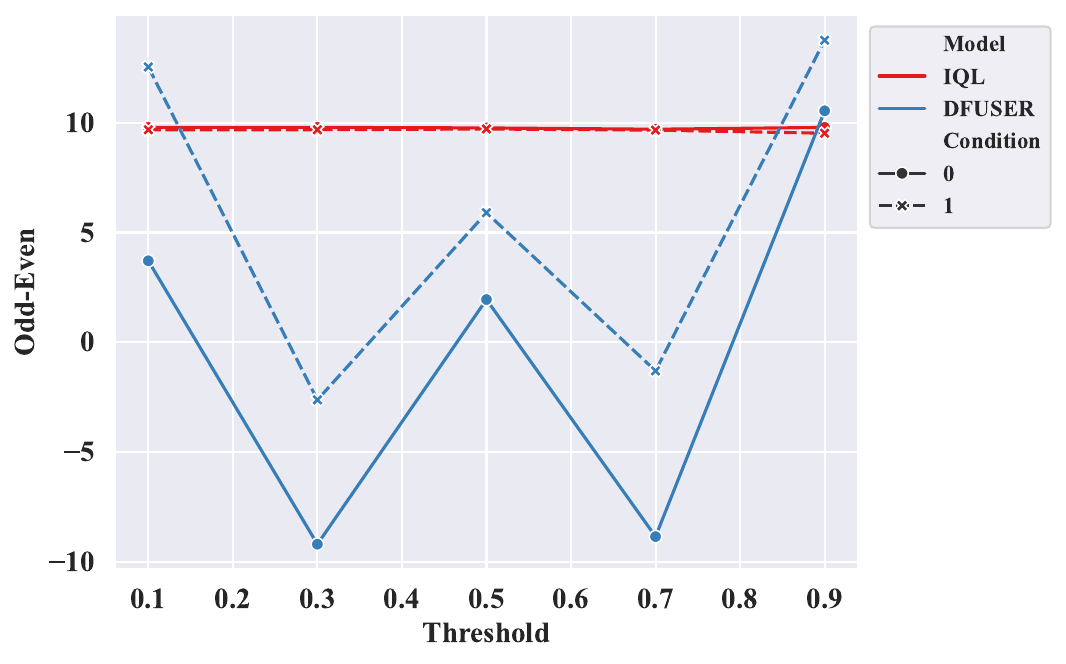}
        \label{fig:control_of_action}
    \caption{Ability of Action Control.}
    \label{fig:control_action}
\end{figure}
We analyze the ability of different models in controlling actions. To achieve this goal, we re-define the return function to be the summation of actions in odd time steps minus the summation of actions in even time steps. The results are shown in Figure~\ref{fig:control_action}. We find DiffBid can better control the action than IQL. The main reason is that controlling of actions is difficult for RL in long horizons. Instead, DiffBid directly models the correlation of trajectories and returns, thus can well handle the long trajectory situation.

\subsection{Statistical Analyses for Bidding Trajectory}

The study by \cite{DBLP:conf/icml/HaoPMWJHCBXXZYL20} indicates that CE follows a power-law decline as the number of winning impressions increases. Our statistical analysis confirms that this finding holds true at every discrete time step, with decay rates varying temporally due to the heterogeneous nature of the impressions. Figure~\ref{fig:cost_effe} shows three steps sampled from the online advertising system, From which another key insight is that the optimal bidding strategy's $ce*$ is equivalent to selecting a specific number of winning impressions per time step. We denote the number at time step $t$ as $n_t$. %Bidding agents should determine appropriate CE ratios to win impressions whose CE exceeds the specified ratio. Consequently, 

Another finding illustrated in Figure~\ref{fig:fluctua} is that the costs of impressions remain relatively stable throughout the total episode, fluctuating by less than 5\%. This stability allows us to approximate the cost of each impression $c_i$ with the average cost $\bar{c} = \frac{1}{N}\sum_{i=0}^{N} c_i$, where $N$ represents the number of winning impressions of the total episode. Therefore, the total cost at each time step $c_t = n_t \cdot \bar{c}$. In auto-bidding modeling, $c_t$ can be calculated from the state trajectory by using the difference in the remaining budget between two consecutive steps. Consequently, we can conclude that the optimal strategy correlates to a specific state trajectory.

\subsection{Theoretical Analysis}
\label{ap:theory}
We first give the definition of several decision process and then show the theoretical analysis.
\begin{definition}[Markovian Decision Process~(MDP)]
MDP is a stochastic mapping from a state-action pair to state-reward pairs. Formally, $\mathcal{T}: \mathcal{S}\times\mathcal{A}\rightarrow \mathcal{S}\times\mathcal{R}$, where $\mathcal{T}$ denotes a stochastic mapping.
\end{definition}
\begin{definition}[History-based Decision Process~(HDP)]
HDP is a stochastic mapping from a history-action pair to observation-reward pairs. Formally, $\mathcal{P}: \mathcal{H}^* \times \mathcal{A}\rightarrow \mathcal{O} \times \mathcal{R}$, where $\mathcal{P}$ denotes a stochastic mapping.
\end{definition}

We show that a sequential decision-making problem can be constructed to maximize the same objective. The main results are given by~\cite{qin2023learning} and we put the proofs here for completeness. To start, let the ground-truth distribution of demonstrations be $p^*(\bs{x}_0(\tau))$ and the learned marginal distributions of state sequences be $p_\theta(\bs{x}_0(\tau))$. Then Eq.~(\ref{eq:objective}) is an empirical estimation of
\begin{equation}
\begin{split}
    \mathbb{E}_{p^*(\bs{s}_0)}\left [ \text{log}p^*(\bs{s}_0)+\mathbb{E}_{p^*(\bs{s}_{1:T}|\bs{s}_0)}\left [ \text{log}p_\theta(\bs{s}_{1:T}|\bs{s}_0) \right] \right ]
\end{split}
\end{equation}
 Suppose the MLE yields the maximum, we will have $p_\theta^*=p^*$. Then we define $V^*(s_0):=\mathbb{E}_{p^*(s_{1:T}|s_0)}[\text{log}p^*(s_{1:T}|s_0)]$, and generalize it to have a $V$ function:
\begin{equation}
    V^*(s_{0:t})=\mathbb{E}_{p^*(s_{t+1:T}|s_{0:t})}[\text{log}p^*(s_{t+1:T}|s_{0:t})]
\end{equation}
which comes with a Bellman optimal equation:
\begin{equation}
    V^{*}(s_{0:t}):=\mathbb{E}_{p^*(s_{t+1}|s_{0:t})}[r(s_{t+1}, s_{0:t})+V^*(s_{0:t+1})]
\end{equation}
with $r(s_{t+1}, s_{0:t}) := \log p^*(s_{t+1}|s_{0:t}) = \log p^*_{a}(s_{t}|s_{0:t})p^*(s_{t+1}|s_t, a_t)dt$, $V^*(s_{0:T}) := 0$. It is worth noting that the $r$ defined above involves the optimal policy, which may not be known a priori. We can resolve this by replacing it with $r_{\alpha}$ for an arbitrary policy $p_{\alpha}(a_t|s_{0:t})$. All Bellman identities and updates should still hold. The entailed Bellman update, \textit{value iteration}, for arbitrary $V$ and $\alpha$ is
\begin{equation}
    V(s_{0:t}) = \mathbb{E}_{p^*(s_{t+1}|s_{0:t})}[ r_{\alpha}(s_{0:t}, s_{t+1}) + V(s_{0:t+1})].
\end{equation}
We then define $r(s_{t+1}, a_t, s_{0:t}) := r(s_{t+1}, s_{0:t}) + \log p^*_{a}(a_t|s_{0:t})$ to construct a $Q$ function:
\begin{equation}
    Q^*(a_t; s_{0:t}) := \mathbb{E}_{p^*(s_{t+1}|s_{0:t})}[ r(s_{t+1}, a_t, s_{0:t}) + V^*(s_{0:t+1})],
\end{equation}
which entails a Bellman update, \textit{Q backup}, for arbitrary $\alpha$, $Q$ and $V$
\begin{equation}
    Q(a_t; s_{0:t}) = \mathbb{E}_{p^*(s_{t+1}|s_{0:t})}[ r_{\alpha}(s_{0:t}, a_t, s_{t+1}) + V(s_{0:t+1})].
\end{equation}
Also note that the $V$ and $Q$ in identities Eq. (23) and Eq. (25) respectively are not necessarily associated with the policy $p_{\alpha}(a_t|s_{0:t})$. Slightly overloading the notations, we use $Q_{\alpha}, V_{\alpha}$ to denote the expected returns from policy $p_{\alpha}(a_t|s_{0:t})$. By now, we finish the construction of atomic algebraic components and move on to check if the relations between them align with the algebraic structure of a sequential decision-making problem. We first prove the construction above is valid at optimality.
\begin{lemma}
    When \( f_{\alpha}(a_{t}; s_{0:t}) = Q^*(a_{t}; s_{0:t}) - V^*(s_{0:t}), p_{\alpha}(a_{t}|s_{0:t}) \) is the optimal policy.
\end{lemma}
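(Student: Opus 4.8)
The statement to be proved is that the identity $f_{\alpha}(a_t; s_{0:t}) = Q^*(a_t; s_{0:t}) - V^*(s_{0:t})$ characterizes $p_{\alpha}(a_t|s_{0:t})$ as the optimal policy. The plan is to unpack the definitions of $Q^*$ and $V^*$ constructed just above, observe that their difference telescopes into a purely local quantity, and then identify the resulting expression with the log-density of an energy-based policy whose maximizer is exactly the behavior policy $p^*$.

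First I would expand the difference $Q^*(a_t; s_{0:t}) - V^*(s_{0:t})$ using the Bellman identities Eq.~(22) for $V^*$ and Eq.~(24) for $Q^*$. Both sit inside an expectation $\mathbb{E}_{p^*(s_{t+1}|s_{0:t})}[\,\cdot\,]$ with a common continuation term $V^*(s_{0:t+1})$, so that term cancels in the subtraction. The reward inside $Q^*$ is $r(s_{t+1}, a_t, s_{0:t}) = r(s_{t+1}, s_{0:t}) + \log p^*(a_t|s_{0:t})$, while the reward inside $V^*$ is $r(s_{t+1}, s_{0:t})$; the $r(s_{t+1}, s_{0:t})$ pieces also cancel. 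What survives is $Q^*(a_t; s_{0:t}) - V^*(s_{0:t}) = \log p^*(a_t|s_{0:t})$, since $\log p^*(a_t|s_{0:t})$ does not depend on $s_{t+1}$ and passes through the expectation. Thus $f_{\alpha}(a_t; s_{0:t}) = \log p^*(a_t|s_{0:t})$.

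Next I would connect this to optimality. An energy-based policy derived from an advantage-like function $f_\alpha$ takes the form $p_\alpha(a_t|s_{0:t}) \propto \exp(f_\alpha(a_t; s_{0:t}))$; when $f_\alpha(a_t; s_{0:t}) = Q^*(a_t; s_{0:t}) - V^*(s_{0:t})$, the exponential normalizes automatically because $V^*(s_{0:t}) = \log \int \exp(Q^*(a_t; s_{0:t}))\, da_t$ is precisely the log-partition function in the soft/maximum-entropy formulation, so $p_\alpha(a_t|s_{0:t}) = \exp(Q^*(a_t;s_{0:t}) - V^*(s_{0:t}))$ exactly. Combining with the previous step, $p_\alpha(a_t|s_{0:t}) = \exp(\log p^*(a_t|s_{0:t})) = p^*(a_t|s_{0:t})$. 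Since the MLE solution satisfies $p_\theta^* = p^*$ and the behavior-cloning policy $p^*(a_t|s_{0:t})$ is, by the construction of the reward $r$ in this soft-RL reduction, exactly the policy that maximizes the constructed objective, $p_\alpha$ coincides with the optimal policy; that is the claim.

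The main obstacle I anticipate is being careful about the soft/entropy-regularized interpretation that makes the normalization work: one must verify that $V^*$ as defined via the Bellman equation Eq.~(22) genuinely equals the log-sum-exp of $Q^*$ over actions (i.e., that the construction is internally the soft value iteration, not the hard $\max$), so that $\exp(Q^* - V^*)$ is a bona fide probability density and not merely proportional to one. This requires checking that the reward term $\log p^*(a_t|s_{0:t})$ inside $r$ plays the role of the negative-entropy penalty, so that the fixed point of the constructed value iteration is the soft optimal value; once that is pinned down, the cancellation and the identification with $p^*$ are routine.
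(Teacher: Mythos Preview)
Your approach is correct and essentially matches the paper's: expand $Q^*$ via its Bellman definition, pull out $\log p^*_{\alpha}(a_t\mid s_{0:t})$ (which is independent of $s_{t+1}$), and recognize the remaining expectation as $V^*(s_{0:t})$, yielding $Q^* - V^* = \log p^*_{\alpha}$. Your anticipated obstacle about verifying $V^* = \log \int \exp(Q^*)\,da_t$ is unnecessary---once $f_{\alpha} = \log p^*_{\alpha}$, the energy-based policy $p_{\alpha} \propto \exp(f_{\alpha}) = p^*_{\alpha}$ is automatically normalized because $p^*_{\alpha}$ is already a probability density, so the paper's proof ends right there without any separate soft-value-iteration check.
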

\begin{proof}
Note that the construction gives us
\begin{equation}
    \begin{split}
    & Q^*(a_{t}; s_{0:t}) \\ 
    =&\mathbb{E}_{p^*(s_{t+1}|s_{0:t})} \left[ r(s_{t+1}, s_{0:t}) + \log p^*_{\alpha}(a_{t}|s_{0:t}) + V^*(s_{0:t+1}) \right] \\
    =&\log p^*_{\alpha}(a_{t}|s_{0:t}) + \mathbb{E}_{p^*(s_{t+1}|s_{0:t})} \left[ r(s_{t+1}, s_{0:t}) + V^*(s_{0:t+1}) \right] \\
    =&\log p^*_{\alpha}(a_{t}|s_{0:t}) + V^*(s_{0:t})
\end{split}
\end{equation}
\end{proof}
Obviously, \( Q^*(a_{t}; s_{0:t}) \) lies in the hypothesis space of \( f_{\alpha}(a_{t}; s_{0:t}) \). It indicates that we need to either parameterize \( f_{\alpha}(a_{t}; s_{0:t}) \) or \( Q(a_{t}; s_{0:t}) \). While \( Q^{\alpha} \) and \( V^{\alpha} \) are constructed from the optimality, the derived \( Q^{\alpha} \) and \( V^{\alpha} \) measure the performance of an interactive agent when it executes with the policy \( p_{\alpha}(a_{t}|s_{0:t}) \). They should be consistent.
\begin{lemma}
    \( V^{\alpha}(s_{0:t}) \) and \( \mathbb{E}_{p_{\alpha}(a_{t}|s_{0:t})} \left[ Q^{\alpha}(a_{t}; s_{0:t}) \right] \) yield the same optimal policy \( p^*_{\alpha}(a_{t}|s_{0:t}) \)
\end{lemma}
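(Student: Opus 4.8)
The plan is to leverage the algebraic structure already set up in the preceding lemmas, in particular the identity $Q^*(a_t; s_{0:t}) = \log p^*_\alpha(a_t|s_{0:t}) + V^*(s_{0:t})$ proven in the previous lemma. The key observation is that the two quantities in the statement differ only by an expectation over the policy, and I would compute that expectation explicitly. Taking expectations of both sides of the identity with respect to $p_\alpha(a_t|s_{0:t})$ gives $\mathbb{E}_{p_\alpha(a_t|s_{0:t})}[Q^\alpha(a_t;s_{0:t})] = \mathbb{E}_{p_\alpha(a_t|s_{0:t})}[\log p_\alpha(a_t|s_{0:t})] + V^\alpha(s_{0:t})$, i.e. the two objects differ exactly by the negative entropy $-\mathcal{H}(p_\alpha(\cdot|s_{0:t}))$ of the policy at the history $s_{0:t}$.

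From there the argument splits into two directions. First I would argue that any maximizer of $V^\alpha(s_{0:t})$ is a maximizer of $\mathbb{E}_{p_\alpha}[Q^\alpha]$: since the Bellman/value-iteration identities from Eq.~(23)--(25) show that $V^\alpha$ already incorporates the full cumulative reward $r_\alpha$ along the trajectory, and the reward $r_\alpha$ itself is defined with the $\log p_\alpha(a_t|s_{0:t})$ term folded in, maximizing $V^\alpha$ over $\alpha$ at every history yields the energy-based (softmax) policy $p^*_\alpha(a_t|s_{0:t}) \propto \exp(f_\alpha(a_t;s_{0:t}))$, which by the previous lemma is exactly the optimal policy. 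Second, for $\mathbb{E}_{p_\alpha}[Q^\alpha]$, I would note that maximizing over the policy the quantity $\mathbb{E}_{p_\alpha}[Q^\alpha(a_t;s_{0:t})] = \mathbb{E}_{p_\alpha}[f_\alpha(a_t;s_{0:t})] + (\text{terms not depending on the choice of action distribution through } f)$ subject to $p_\alpha$ being a normalized distribution — but because $r_\alpha$ (hence $Q^\alpha$) contains the $\log p_\alpha$ self-referential term, the optimization is really a free-energy maximization whose unique solution is again the Gibbs distribution $p^*_\alpha(a_t|s_{0:t})\propto\exp(f_\alpha(a_t;s_{0:t}))$. Hence both objectives are extremized by the same $p^*_\alpha$, and I would conclude by backward induction on $t$ from $t=T$ (where $V^*(s_{0:T})=0$) that the optimal policies coincide at every time step and every history.

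The main obstacle I anticipate is handling the self-referential appearance of $\log p_\alpha(a_t|s_{0:t})$ inside the reward $r_\alpha$ and therefore inside $Q^\alpha$: one must be careful that "maximizing $\mathbb{E}_{p_\alpha}[Q^\alpha]$" means jointly optimizing the policy that appears both as the averaging distribution and inside $Q^\alpha$, and that this joint optimization is well-posed and has the Gibbs form as its unique solution. The cleanest way around this is the standard variational (Donsker--Varadhan / Gibbs) identity: for any function $g$, $\max_{p}\,\mathbb{E}_{p}[g(a) - \log p(a)] = \log\int \exp(g(a))\,da$, attained uniquely at $p(a)\propto\exp(g(a))$. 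Applying this with $g = f_\alpha$ at each history $s_{0:t}$ pins down the optimal policy for the $\mathbb{E}_{p_\alpha}[Q^\alpha]$ objective, and a parallel application inside the Bellman recursion pins it down for the $V^\alpha$ objective; matching the two completes the proof. I would also need to verify that the induction closes cleanly, i.e. that optimality at steps $t+1,\dots,T$ does not perturb the per-step optimization at step $t$, which follows because $V^*(s_{0:t+1})$ enters $Q^*(a_t;s_{0:t})$ only additively and is already at its optimal value by the inductive hypothesis.
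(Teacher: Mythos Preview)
Your approach is in the same spirit as the paper's: both arguments reduce to showing that $\mathbb{E}_{p_\alpha}[Q^\alpha]$ and $V^\alpha$ differ by policy-entropy terms and then invoke the maximum-entropy property of energy-based (Gibbs) policies. The paper, however, does not reuse the optimal identity from the previous lemma; instead it expands $\mathbb{E}_{p_\alpha}[Q^\alpha]$ directly from the definition $r(s_{t+1},a_t,s_{0:t})=r(s_{t+1},s_{0:t})+\log p_\alpha(a_t|s_{0:t})$ and then recursively unrolls the Bellman identity for $V^\alpha$ all the way to $s_{0:T}$, obtaining a \emph{sum} of entropy corrections $\sum_{k\ge t} H_\alpha(a_k|s_{0:k})$ rather than a single one. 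It then closes with a one-line appeal to Jaynes' maximum-entropy principle. Your per-step identity plus backward induction is an equivalent reorganization, and your explicit Donsker--Varadhan/Gibbs variational argument is more rigorous than the paper's terse citation.

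There is one small gap worth patching. The identity $Q^*(a_t;s_{0:t})=\log p^*_\alpha(a_t|s_{0:t})+V^*(s_{0:t})$ you import from the previous lemma holds only at optimality, yet in your next line you silently replace $Q^*,p^*_\alpha,V^*$ by $Q^\alpha,p_\alpha,V^\alpha$. That substitution is not automatic: for a general $\alpha$ the remainder $\mathbb{E}_{p^*(s_{t+1}|s_{0:t})}[r(s_{t+1},s_{0:t})+V^\alpha(s_{0:t+1})]$ is not $V^\alpha(s_{0:t})$ (the latter uses $r_\alpha$, not $r$). The fix is exactly what the paper does---derive the one-step decomposition for $Q^\alpha$ from its own definition rather than by analogy to the optimal case---after which your variational and inductive steps go through cleanly.
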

\begin{proof}
\begin{equation*}
\small
\begin{split}
    & \mathbb{E}_{p_{\alpha}(a_{t}|s_{0:t})} \left[ Q^{\alpha}(a_{t}; s_{0:t}) \right]  \\
    :=& \mathbb{E}_{p_{\alpha}(a_{t}|s_{0:t})} \left[ \mathbb{E}_{p^*(s_{t+1}|s_{0:t})} \left[ r(s_{t+1}, a_{t}, s_{0:t}) + V^{\alpha}(s_{0:t+1}) \right] \right] \\
    =&\mathbb{E}_{p_{\alpha}(a_{t}|s_{0:t})} \mathbb{E}_{p^*(s_{t+1}|s_{0:t})} \left[ \log p^*_{\alpha}(a_{t}|s_{0:t}) + r(s_{t+1}, s_{0:t}) + V^{\alpha}(s_{0:t+1}) \right] \\
    =&\mathbb{E}_{p^*(s_{t+1}|s_{0:t})} \left[ r(s_{t+1}, s_{0:t}) - H_{\alpha}(a_{t}|s_{0:t}) + V^{\alpha}(s_{0:t+1}) \right] \\
     &-\sum_{k=t+1}^{T-1} \mathbb{E}_{p^*(s_{t+1:k}|s_{0:t})} \left[ H_{\alpha}(a_{k}|s_{0:k}) \right]
\end{split}
\end{equation*}
where $\mathcal{H}(\cdot)$ is the entropy term. The last line is derived by recursively applying the Bellman equation in the line above until \( s_{0:T} \). As an energy-based policy, \( p_{\alpha}(a_{t}|s_{0:t}) \)'s entropy is inherently maximized~\cite{jaynes1957information}. Therefore, within the hypothesis space, \( p_{\alpha}^*(a_{t}|s_{0:t}) \) that optimizes \( V^{\alpha}(s_{0:t}) \) also leads to the optimal expected return \( \mathbb{E}_{p_{\alpha}(a_{t}|s_{0:t})} \left[ Q^{\alpha}(a_{t}; s_{0:t}) \right] \). 
\end{proof}

Given the convergence proof by Ziebart~\cite{10.5555/2049078}, we have:
\begin{lemma}
    If \( p^*(s_{t+1}|s_{0:t}) \) is accessible and \( p^*_{\gamma}(s_{t+1}|s_t, a_t) \) is known, soft policy iteration and soft \( Q \) learning both converge to \( p^*_{\alpha}(a_t|s_{0:t}) = p^*_{\alpha}(a_t|s_{0:t}) \propto \exp(Q^*(a_t; s_{0:t})) \) under conditions.
\end{lemma}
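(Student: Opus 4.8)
The plan is to reduce the claimed convergence to the standard maximum-entropy reinforcement learning guarantees of Ziebart~\cite{10.5555/2049078} by lifting the history-based decision process to a \emph{Markovian} one on an augmented state space, and then running the two classical soft-learning arguments on that lift. First I would define the augmented state $\tilde{s}_t := s_{0:t}$ and observe that the transition $p^*(s_{t+1}|s_{0:t})$ together with the deterministic concatenation $\tilde{s}_{t+1} = (\tilde{s}_t, s_{t+1})$ makes the process Markovian in $\tilde{s}_t$; moreover the reward $r_{\alpha}(s_{t+1}; s_{0:t})$ and the augmented reward $r(s_{t+1}, a_t, s_{0:t})$ constructed earlier are measurable functions of $(\tilde{s}_t, a_t, s_{t+1})$. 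Hence the whole HDP construction is exactly a finite-horizon MDP on the augmented space (finite because $V^*(s_{0:T}) = 0$ terminates the recursion), and every soft Bellman identity written above for $V^*$ and $Q^*$ is the corresponding soft Bellman equation of that MDP, so Ziebart's theorems apply with the history as state.

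On this augmented MDP I would then establish soft policy iteration in the usual two steps. The first is soft policy evaluation: fixing $p_{\alpha}$, the backup $V(\tilde{s}_t) \mapsto \mathbb{E}_{p^*(s_{t+1}|s_{0:t})}[r_{\alpha}(s_{0:t}, a_t, s_{t+1}) + V(\tilde{s}_{t+1})]$ converges, being an exact backward induction terminating at $V(s_{0:T}) = 0$ in the finite-horizon setting (and a supremum-norm contraction in the discounted variant). The second is soft policy improvement: replacing $p_{\alpha}$ by the Boltzmann policy $p_{\alpha'}(a_t|s_{0:t}) \propto \exp(Q_{\alpha}(a_t; s_{0:t}))$ gives $Q_{\alpha'} \ge Q_{\alpha}$ pointwise, the soft analogue of the policy-improvement theorem, which follows from the variational characterisation of the log-sum-exp soft maximum together with the entropy term already made explicit in the consistency lemma for $V^{\alpha}$. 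Iterating the two steps yields a monotone bounded sequence of soft value functions, hence convergence; at the fixed point the improvement inequality is tight, forcing $p_{\alpha}^* \propto \exp(Q^*(a_t; s_{0:t}))$.

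For soft $Q$ learning I would argue through the soft Bellman optimality operator, in which the action maximisation is replaced by the soft maximum $\log\int \exp(Q(a_t; s_{0:t}))\, d a_t$. In the finite-horizon case its fixed point $Q^*$ is again obtained by backward induction, and in the stochastic-approximation case it is the unique fixed point of a supremum-norm contraction, so the usual Robbins--Monro step-size and exploration hypotheses (the ``under conditions'' caveat in the statement) give almost-sure convergence of the estimated $Q$ to $Q^*$. Combining either route with the earlier lemma identifying $f_{\alpha} = Q^* - V^*$ with the optimal policy, and with the consistency lemma certifying that maximising $V^{\alpha}(s_{0:t})$ and maximising $\mathbb{E}_{p_{\alpha}(a_t|s_{0:t})}[Q^{\alpha}(a_t; s_{0:t})]$ select the same policy, shows that both procedures converge to the same energy-based policy $p_{\alpha}^*(a_t|s_{0:t}) \propto \exp(Q^*(a_t; s_{0:t}))$.

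The step I expect to be the main obstacle is not any single contraction estimate but verifying that the augmented-state lift preserves the structural regularity Ziebart's theorems require. Concretely, one must check that the reward stays bounded (or integrable) on the history space whose dimension grows with $t$, that the soft maximum $\log\int \exp(Q)\, d a_t$ is finite for the energy-based policy class, and that sufficient exploration of $(s_{0:t}, a_t)$ pairs is available so the stochastic-approximation assumptions still hold despite the expanding state. Making these ``conditions'' precise, rather than manipulating the soft Bellman backups, is where the real care is needed.
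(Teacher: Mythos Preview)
Your proposal is substantially more detailed than what the paper itself does: the paper offers no proof beyond the single sentence ``Given the convergence proof by Ziebart~\cite{10.5555/2049078}, we have:'' preceding the lemma. In effect the paper treats the result as an immediate corollary of Ziebart's maximum-entropy RL convergence theorems and does not spell out the reduction.

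Your route---lifting the HDP to an MDP on the augmented state $\tilde{s}_t = s_{0:t}$ and then invoking the standard soft policy iteration and soft $Q$-learning contraction/backward-induction arguments---is exactly the mechanism one needs to make that citation rigorous, and it is correct. The paper leaves this lift implicit; you make it explicit and also correctly identify where the ``under conditions'' caveat bites (boundedness on the growing history space, finiteness of the log-partition, exploration on $(s_{0:t},a_t)$). So the two approaches are the same in spirit---both reduce to Ziebart---but yours actually supplies the missing bridge, whereas the paper simply asserts it.
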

 Lemma 3 means given \( p^*(s_{t+1}|s_{0:t}) \) and \( p^*_{\gamma}(s_{t+1}|s_t, a_t) \), we can recover \( p^*_{\alpha} \) through reinforcement learning methods, instead of the proposed MLE. So \( p_{\alpha}(a_t|s_{0:t}) \) is a viable policy space for the constructed sequential decision-making problem. Together, Lemma A.1, Lemma A.2 and Lemma A.3 provide proof for a valid sequential decision-making problem that maximizes the same objective of MLE, by Lemma A.4.

\begin{lemma}[MLE as non-Markovian decision-making process]
Assuming the Markovian transition $p_{\gamma^*}(s_{t+1}|s_t, a_t)$ is known, the ground-truth conditional state distribution $p^*(s_{t+1} |s_{0:t})$ for demonstration sequences is accessible, we can construct a non-Markovian sequential decision-making problem, based on a reward function $r_{\alpha}(s_{t+1},s_{0:t}):={\rm{log}}\int p_{\alpha}(a_t|s_{0:t})p_{\gamma^*}(s_{t+1}|s_t,a_t)d a_t$ for an arbitrary energy-based policy $p_\alpha(a_t|s_{0:t})$. Its objective is
\begin{equation}
    \sum_{t=0}^T\mathbb{E}_{p^*(s_{0:t})}\left [ V^{p_\alpha}(s_{0:t}) \right ]=\mathbb{E}_{p^*(s_{0:T})}\left [ \sum_{t=0}^{T}\sum_{k=t}^T r_{\alpha}(s_{k+1};s_{0:k}) \right ]
\end{equation}
$V^{p_\alpha}(s_{0:t}):=\mathbb{E}_{p^*(s_{t+1:T}|s_{0:t})}[\sum_{k=1}^Tr_\alpha(s_{k+1};s_{0:k})]$ is the value function of $p_\alpha$. This objective yields the save optimal policy as the Maximum Likelihood Estimation  $\mathbb{E}_{p^*(s_{0:T})}\left [{\rm{log}}
 p_\theta(s_{0:T})\right ]$.
 \end{lemma}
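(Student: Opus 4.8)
The plan is to assemble the four auxiliary lemmas (A.1--A.4 in the appendix) into a single equivalence chain, treating the construction of the value-iteration and $Q$-backup operators in Section~\ref{ap:theory} as given. First I would pin down the precise object on the MLE side: by the factorization $p_\theta(s_{0:T}) = p^*(s_0)\prod_{t=0}^{T-1} p_\theta(s_{t+1}\mid s_{0:t})$ and the fact that $p^*(s_0)$ carries no learnable parameters, maximizing $\mathbb{E}_{p^*(s_{0:T})}[\log p_\theta(s_{0:T})]$ is equivalent to maximizing $\sum_{t}\mathbb{E}_{p^*(s_{0:t})}\mathbb{E}_{p^*(s_{t+1}\mid s_{0:t})}[\log p_\theta(s_{t+1}\mid s_{0:t})]$, whose unconstrained optimum is attained at $p_\theta^\star = p^\star$, giving the value function $V^\star(s_{0:t}) = \mathbb{E}_{p^*(s_{t+1:T}\mid s_{0:t})}[\log p^*(s_{t+1:T}\mid s_{0:t})]$ already recorded in the excerpt.

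Next I would establish the telescoping identity on the decision side. Starting from $V^{p_\alpha}(s_{0:t}) := \mathbb{E}_{p^*(s_{t+1:T}\mid s_{0:t})}\bigl[\sum_{k=t}^{T} r_\alpha(s_{k+1};s_{0:k})\bigr]$ and unrolling the expectation over the tower $p^*(s_{t+1}\mid s_{0:t}), p^*(s_{t+2}\mid s_{0:t+1}),\dots$, the $t$-indexed sum $\sum_{t=0}^{T}\mathbb{E}_{p^*(s_{0:t})}[V^{p_\alpha}(s_{0:t})]$ collapses into $\mathbb{E}_{p^*(s_{0:T})}\bigl[\sum_{t=0}^{T}\sum_{k=t}^{T} r_\alpha(s_{k+1};s_{0:k})\bigr]$ — this is the claimed right-hand side, and it is a pure application of Fubini/law of total expectation with no policy optimization involved. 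Then I would invoke Lemma~A.1 (optimality of the constructed $f_\alpha$, i.e. $Q^\star(a_t;s_{0:t}) = \log p^*_\alpha(a_t\mid s_{0:t}) + V^\star(s_{0:t})$) together with Lemma~A.2 ($V^\alpha$ and $\mathbb{E}_{p_\alpha}[Q^\alpha]$ yield the same optimal energy-based policy, thanks to maximum entropy of $p_\alpha$) and Lemma~A.3 (soft policy iteration / soft $Q$-learning converge to $p^*_\alpha \propto \exp Q^\star$ given access to $p^*(s_{t+1}\mid s_{0:t})$ and $p^*_{\gamma}(s_{t+1}\mid s_t,a_t)$). Chaining these, the policy that maximizes $\sum_t \mathbb{E}_{p^*(s_{0:t})}[V^{p_\alpha}(s_{0:t})]$ coincides with the one induced by $p_\theta^\star = p^*$, which is exactly the MLE optimizer.

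Finally I would close the loop by checking the reward is well-defined: $r_\alpha(s_{t+1},s_{0:t}) = \log\int p_\alpha(a_t\mid s_{0:t}) p_{\gamma^*}(s_{t+1}\mid s_t,a_t)\,da_t$ marginalizes the known transition against the energy-based policy, so it depends only on quantities the lemma assumes accessible, and when $p_\alpha = p^*_\alpha$ it reduces to $\log p^*(s_{t+1}\mid s_{0:t})$, reconciling the two value functions termwise. The main obstacle I anticipate is \emph{not} the telescoping algebra but making the ``same optimal policy'' claim rigorous: one must argue that the energy-based (maximum-entropy) parameterization of $p_\alpha$ does not shift the argmax relative to the MLE objective, i.e. that the entropy bonus terms appearing in the expansion in Lemma~A.2 are policy-independent constants over the relevant hypothesis class (or are themselves maximized at the same point), and that the non-Markovian value iteration actually converges under the stated conditions — this is where the appeal to Ziebart's convergence result and the Jaynes maximum-entropy principle does the real work, and I would be careful to state the regularity conditions (bounded rewards, finite horizon $T$, realizability of $p^*$ in the model class) under which the chain of equalities is exact rather than merely approximate.
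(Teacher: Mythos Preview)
Your proposal is correct and follows essentially the same route as the paper: the paper's argument is precisely to set up the MLE factorization, define $V^\star$, $Q^\star$, and $r_\alpha$, and then state that ``Together, Lemma~A.1, Lemma~A.2 and Lemma~A.3 provide proof for a valid sequential decision-making problem that maximizes the same objective of MLE''---your outline reproduces this chain with added care (the telescoping identity, the well-definedness check for $r_\alpha$, and the explicit regularity caveats) that the paper itself leaves implicit. One small slip: you refer to ``four auxiliary lemmas (A.1--A.4)'', but A.4 is the statement you are proving; only A.1--A.3 are inputs.
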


\begin{table}[t]
    \centering
     \caption{Parameters of the Real Advertising System.}
    \resizebox{.9\linewidth}{!}{
    \begin{tabular}{l|c}
    \toprule
       Parameters  & Values \\
    \midrule
       Number of advertisers  & 30 \\
       Time steps in an episode, $T$ & 96 \\
       Minimum number of impression opportunities $N_{\text{min}}$ & 50 \\
       Maximum number of impression opportunities $N_{\text{max}}$ & 300 \\
       Minimum budget & 1000 Yuan \\
       Maximum budget & 4000 Yuan \\
       Value of impression opportunities in stage 1, $v_{j,t}^1$ & 0$\sim$ 1 \\
       Value of impression opportunities in stage 2, $v_{j,t}^2$ & 0$\sim$ 1 \\
       Minimum bid price, $A_{\text{min}}$ & 0 Yuan \\ 
       Maximum bid price, $A_{\text{max}}$ & 1000 Yuan \\
       Maximum value of impression opportunity, $v_M$ & 1 \\
       Maximum market price, $p_M$ & 1000 Yuan \\
    \bottomrule
    \end{tabular}}
    \label{tab:environment}
\end{table}

\begin{figure}[t]
    \centering
    \subfigure[Cost Effectiveness Curve Samples from Three Steps]{
        \includegraphics[width=.47\linewidth]{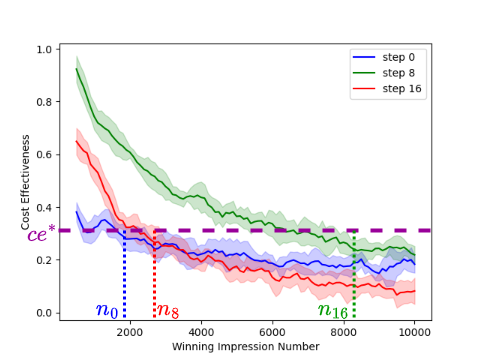}
        \label{fig:cost_effe}
    }
    \subfigure[Impression Cost Fluctuation at Different Time Steps]{
        \includegraphics[width=.47\linewidth]{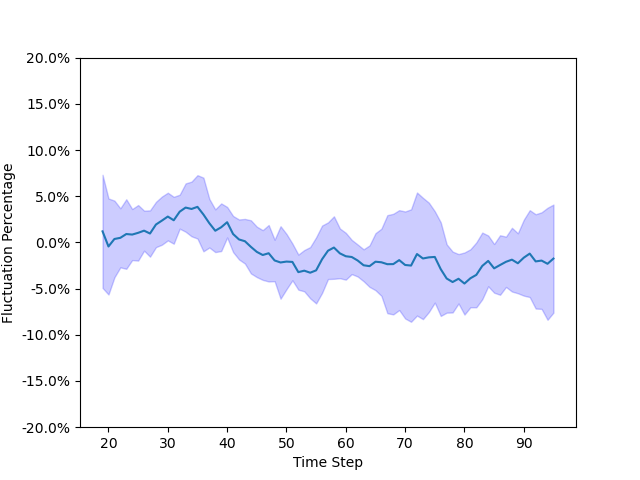}
        \label{fig:fluctua}
    }
    \caption{Statistical Results From Online Advertising System.}
    \label{fig:statisti}
\end{figure}

\end{document}